\definecolor{teal}{rgb}{0.36, 0.54, 0.66}
\icmltitlerunning{Fast Differentiable Sorting and Ranking}
\DeclareMathOperator*{\argmax}{argmax}
\DeclareMathOperator*{\argmin}{argmin}
\DeclareMathOperator*{\dom}{dom}
\DeclareMathOperator*{\conv}{conv}
\newcommand\partialfrac[2]{\frac{\partial #1}{\partial #2}}
\def\cB{\mathcal{B}}
\def\cC{\mathcal{C}}
\def\cP{\mathcal{P}}
\def\cS{\mathcal{S}}
\def\cV{\mathcal{V}}
\def\cX{\mathcal{X}}
\def\cY{\mathcal{Y}}
\def\RR{\mathbb{R}}
\def\bnu{\bm{\nu}}
\def\btheta{\bm{\theta}}
\def\bsigma{\bm{\sigma}}
\def\bmu{\bm{\mu}}
\def\blambda{\bm{\lambda}}
\def\B{\bm{B}}
\def\I{\bm{I}}
\def\J{\bm{J}}
\def\P{\bm{P}}
\def\a{\bm{a}}
\def\b{\bm{b}}
\def\f{\bm{f}}
\def\r{\bm{r}}
\def\s{\bm{s}}
\def\u{\bm{u}}
\def\v{\bm{v}}
\def\w{{\bm{w}}}
\def\x{\bm{x}}
\def\y{{\bm{y}}}
\def\z{\bm{z}}
\def\ones{\mathbf{1}}
\def\zeros{\mathbf{0}}
\def\epsmin{\varepsilon_{\text{min}}}
\def\epsmax{\varepsilon_{\text{max}}}
\def\re{\rho}
\def\bre{\bm{\rho}}
\def\sigmainv{{\sigma^{-1}}}
\def\argsort{\textbf{argsort}\xspace}
\def\ranks{\textbf{rank}\xspace}
\definecolor{theoremcolor}{rgb}{1.0, 1.0, 1.0}
\newmdtheoremenv{proposition}{Proposition}
\newmdtheoremenv{lemma}{Lemma}
\newmdtheoremenv{corollary}{Corollary}
\begin{document}

\twocolumn[
\icmltitle{Fast Differentiable Sorting and Ranking}



\icmlsetsymbol{equal}{*}

\begin{icmlauthorlist}
\icmlauthor{Mathieu Blondel}{goo}
\icmlauthor{Olivier Teboul}{goo}
\icmlauthor{Quentin Berthet}{goo}
\icmlauthor{Josip Djolonga}{goo}
\end{icmlauthorlist}

\icmlaffiliation{goo}{Google Research, Brain team}

\icmlcorrespondingauthor{Mathieu Blondel}{mblondel@google.com}
\icmlcorrespondingauthor{Olivier Teboul}{oliviert@google.com}
\icmlcorrespondingauthor{Quentin Berthet}{qberthet@google.com}
\icmlcorrespondingauthor{Josip Djolonga}{josipd@google.com}

\icmlkeywords{sorting, ranking, permutahedron, isotonic regression}

\vskip 0.3in
]



\printAffiliationsAndNotice{}  

\begin{abstract}
The sorting operation is one of the most commonly used building blocks
in computer programming. In machine learning, it is often
used for robust statistics. However, seen
as a function, it is piecewise linear and as a result includes many kinks 
where it is non-differentiable. More problematic is the related ranking operator,
often used for order statistics and ranking metrics.
It is a piecewise constant function, meaning that its derivatives are null or
undefined.  While numerous works have proposed differentiable proxies to sorting
and ranking, they do not achieve the $O(n \log n)$ time complexity one would
expect from sorting and ranking operations. In this paper, we propose the first
differentiable sorting and ranking operators with $O(n \log n)$ time and $O(n)$
space complexity.
Our proposal in addition enjoys exact computation and differentiation.
We achieve this feat by constructing differentiable
operators as projections onto the permutahedron, the convex hull of permutations,
and using a reduction to isotonic optimization.
Empirically, we confirm that our approach is an order of magnitude faster than
existing approaches and showcase two novel applications: differentiable
Spearman's rank correlation coefficient and least trimmed squares.
\end{abstract}

\vspace{-0.5cm}
\section{Introduction}

Modern deep learning architectures are built by composing parameterized
functional blocks (including loops and conditionals) and are trained end-to-end
using gradient backpropagation. This has motivated the term
\textbf{differentiable programming}, recently popularized, among others, by
\citet{lecun_2018}. Despite great empirical successes, many operations 
commonly used in computer programming remain poorly differentiable or
downright pathological, limiting the set of architectures for which a gradient
can be computed.

We focus in this paper on two such operations: \textbf{sorting} and
\textbf{ranking}. Sorting
returns the given input vector with its values re-arranged in monotonic order. 
It plays a key role to handle outliers in robust statistics, as in least-quantile
\citep{rousseeuw_1984} or trimmed \citep{rousseeuw_2005} regression. As a
piecewise linear function, however, the sorted vector contains many kinks where
it is non-differentiable. In addition, when used in composition with other
functions, sorting often induces non-convexity, thus rendering model
parameter optimization difficult.

The ranking operation, on the other hand, outputs the positions, or ranks, of
the input values in the sorted vector. A workhorse of
order statistics \citep{david_2004}, ranks are used in several
metrics, including Spearman's rank correlation coefficient
\citep{spearman_1904}, top-$k$ accuracy and normalized discounted cumulative
gain (NDCG). As piecewise constant functions, ranks are
unfortunately much more problematic than sorting: their derivatives are null or undefined,
preventing gradient backpropagation. For this reason, a large body of work has
studied differentiable proxies to ranking. While several works opt 
to approximate ranking metrics directly
\citep{chapelle_2010,sinkprop,lapin_2016, rolinek_2020},
others introduce ``soft'' ranks, which can then be plugged into
any differentiable loss function.  \citet{taylor_2008} use a random perturbation
technique to compute expected ranks in $O(n^3)$ time, where $n$ is the
dimensionality of the vector to rank.
\citet{qin_2010} propose a simple method based on comparing
pairwise distances between values, thereby taking
$O(n^2)$ time. This method is refined by \citet{grover_2019} using unimodal
row-stochastic matrices. Lastly, \citet{cuturi_2019} adopt an optimal transport
viewpoint of sorting and ranking. Their method is based on differentiating
through the iterates of the Sinkhorn algorithm \citep{sinkhorn} and costs
$O(Tmn)$ time, where $T$ is the number of Sinkhorn
iterations and $m \in \mathbb{N}$ is a hyper-parameter which trades computational
cost and precision (convergence to ``hard'' sort and ranks is only guaranteed if
$m=n$).  

In this paper, we propose the first differentiable sorting and ranking operators
with $O(n \log n)$ time and $O(n)$ memory complexity. Our proposals enjoy
\textbf{exact} computation and differentiation (i.e., they do not involve
differentiating through the iterates of an approximate algorithm).
We achieve this feat by casting differentiable sorting and ranking as
projections onto the permutahedron, the convex hull of all permutations,
and using a reduction to isotonic optimization.
While the permutahedron had been used for learning before
\citep{yasutake_2011,ailon_2016,blondel_2019}, it had not been used 
to define fast differentiable operators.
The rest of the paper is organized as follows.
\begin{itemize}[topsep=0pt,itemsep=2pt,parsep=2pt,leftmargin=10pt]

    \item We review the necessary background (\S\ref{sec:preliminaries}) and 
	show how to cast sorting and ranking as linear programs over the
        permutahedron, the convex hull of all permutations (\S\ref{sec:lp_formulations}).

    \item We introduce regularization in these linear programs, which turns them
        into projections onto the permutahedron and allows us to define
        differentiable sorting and ranking operators.
        We analyze the properties of these operators, such as their asymptotic
        behavior (\S\ref{sec:soft_ops}).

    \item Using a reduction to isotonic optimization, we achieve $O(n \log n)$
        computation and $O(n)$ differentiation of our operators, a key technical
    contribution of this paper (\S\ref{sec:fast_comp_diff}).

    \item We show that our approach is an order of magnitude faster than
existing approaches and showcase two novel applications: differentiable
Spearman's rank coefficient and soft least trimmed squares
(\S\ref{sec:experiments}).

\end{itemize}

\section{Preliminaries}
\label{sec:preliminaries}

In this section, we define the notation that will be used throughout this paper.
Let $\btheta \coloneqq (\theta_1, \dots, \theta_n) \in \RR^n$. We
will think of $\btheta$ as a vector of scores or ``logits'' produced by a
model, i.e., $\btheta \coloneqq g(\x)$ for some $g \colon \cX \to \RR^n$ and
some $\x \in \cX$.  For instance, in a label ranking setting, $\btheta$ may
contain the score of each of $n$ labels for the features $\x$.

We denote a \textbf{permutation} of $[n]$ by $\sigma = (\sigma_1, \dots,
\sigma_n)$ and its inverse by $\sigma^{-1}$.  For convenience, we will sometimes
use $\pi \coloneqq \sigmainv$.  If a permutation $\sigma$ is seen as a
vector, we denote it with bold, $\bsigma \in [n]^n$.  We denote the set of $n!$
permutations of $[n]$ by $\Sigma$.
Given a permutation $\sigma \in \Sigma$, we denote the version of
$\btheta = (\theta_1, \dots, \theta_n) \in \RR^n$ permuted according to $\sigma$
by $\btheta_\sigma \coloneqq (\theta_{\sigma_1}, \dots, \theta_{\sigma_n}) \in
\RR^n$.
We define the reversing permutation by 
$\re \coloneqq (n, n-1, \dots, 1)$ or $\bre$ in vector form.
Given a set $\cS \subseteq [n]$ and a vector $\v \in \RR^n$, we denote 
the restriction of $\v$ to $\cS$ by 
$\v_{\cS} \coloneqq (v_i \colon i \in \cS) \in \RR^{|\cS|}$.

We define the \argsort of $\btheta$ as the indices sorting $\btheta$,
i.e.,
\begin{equation}
\sigma(\btheta) \coloneqq (\sigma_1(\btheta), \dots,
\sigma_n(\btheta)),
\end{equation}
where $\theta_{\sigma_1(\btheta)} \ge \dots \ge \theta_{\sigma_n(\btheta)}$.
If some of the coordinates of $\btheta$ are equal, we break ties arbitrarily. 
We define the \textbf{sort} of $\btheta$ as the values of $\btheta$ in
descending order, i.e.,
\begin{equation}
    s(\btheta) \coloneqq \btheta_{\sigma(\btheta)}.
\end{equation}
We define the \ranks of $\btheta$ as the function evaluating at coordinate $j$ to the position of $\theta_j$ in the descending sort (smaller rank $r_j(\btheta)$ means that $\theta_j$ has higher
value). It is formally equal to the argsort's inverse permutation, i.e.,
\begin{equation}
    r(\btheta) \coloneqq \sigma^{-1}(\btheta).
\end{equation}
For instance, if $\theta_3 \geq \theta_1 \geq \theta_2$, then 
$\sigma(\btheta) = (3, 1, 2)$, 
$s(\btheta) = (\theta_3, \theta_1, \theta_2)$ and
$r(\btheta) = (2, 3, 1)$.
All three operations can be computed in $O(n \log n)$ time.
Note that throughout this paper, we use descending order for convenience.
The ascending order counterparts are easily obtained by 
$\sigma(-\btheta)$, $-s(-\btheta)$ and $r(-\btheta)$, respectively.

\section{Sorting and ranking as linear programs}
\label{sec:lp_formulations}

We show in this section how to cast sorting and ranking operations
as linear programs over the permutahedron.
To that end, we first formulate the argsort and ranking operations as
optimization problems over the set of permutations $\Sigma$.
\begin{lemma}{Discrete optimization formulations}
\label{lemma:discrete}

For all $\btheta \in \RR^n$ and
$\bre \coloneqq (n, n-1, \dots, 1)$, we have
\begin{align}
\sigma(\btheta) 
&= \argmax_{\sigma \in \Sigma} \langle \btheta_\sigma, \bre \rangle
\label{eq:sigma_discrete}, \textrm{ and} \\
r(\btheta) 
&= \argmax_{\pi \in \Sigma} \langle \btheta, \bre_\pi \rangle.
\label{eq:r_discrete}
\end{align}
\end{lemma}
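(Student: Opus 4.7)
The plan is to recognize both statements as direct consequences of the classical rearrangement inequality, after a small change of variables.

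For the first identity, I would write the inner product explicitly as $\langle \btheta_\sigma, \bre\rangle = \sum_{i=1}^n \theta_{\sigma_i}(n+1-i)$. Since the weights $n, n-1, \dots, 1$ are strictly decreasing, the rearrangement inequality tells us that this sum is maximized exactly when $\theta_{\sigma_1} \ge \theta_{\sigma_2} \ge \dots \ge \theta_{\sigma_n}$, i.e., when $\sigma$ matches the largest entries of $\btheta$ with the largest entries of $\bre$. By definition this is precisely $\sigma(\btheta)$ (with the arbitrary tie-breaking convention). I would either cite the rearrangement inequality directly or, if a self-contained argument is preferred, give a one-line swap argument: if any pair $(i,j)$ with $i<j$ has $\theta_{\sigma_i} < \theta_{\sigma_j}$, then swapping $\sigma_i$ and $\sigma_j$ strictly increases the objective.

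For the second identity, the cleanest route is a change of variables. For any $\pi \in \Sigma$, set $\sigma \coloneqq \pi^{-1}$, which is also a bijection of $\Sigma$ onto itself. Then
\begin{equation*}
\langle \btheta, \bre_\pi\rangle
= \sum_{j=1}^n \theta_j\, \rho_{\pi_j}
= \sum_{i=1}^n \theta_{\sigma_i}\, \rho_i
= \langle \btheta_\sigma, \bre\rangle,
\end{equation*}
where the middle equality uses the substitution $i = \pi_j$, equivalently $j = \sigma_i$, so that as $j$ ranges over $[n]$, so does $i$. Maximizing over $\pi \in \Sigma$ is therefore equivalent to maximizing over $\sigma \in \Sigma$, and by the first identity the optimizer on the $\sigma$-side is $\sigma(\btheta)$. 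Inverting back gives $\pi^\star = \sigma(\btheta)^{-1} = r(\btheta)$, as claimed.

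No step is genuinely hard: the only mild care needed is in the index bookkeeping for the change of variables, making sure that $\bre_\pi$ and $\btheta_\sigma$ are indexed consistently with the paper's convention $\btheta_\sigma = (\theta_{\sigma_1}, \dots, \theta_{\sigma_n})$. The tie-breaking caveat from the definition of $\sigma(\btheta)$ carries over automatically, since in the presence of equal coordinates the maximizer is not unique but any choice consistent with the descending-order convention works.
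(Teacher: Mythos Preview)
Your proposal is correct and follows essentially the same approach as the paper: the first identity is the rearrangement inequality (the paper states it for any strictly decreasing $\w$ and specializes to $\bre$, while you give the explicit swap argument), and the second identity is obtained by the same change of variable $\sigma = \pi^{-1}$ together with the identity $\langle \btheta, \bre_\pi \rangle = \langle \btheta_\sigma, \bre \rangle$. Your write-up is in fact slightly more detailed than the paper's, which simply asserts the first claim and then manipulates $\argmax_{\sigma} \langle \btheta, \w_{\sigma^{-1}} \rangle = \bigl(\argmax_{\pi} \langle \btheta, \w_\pi \rangle\bigr)^{-1}$ for the second.
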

A proof is provided in \S\ref{appendix:proof_lemma_discrete}.
To obtain continuous optimization problems,
we introduce the permutahedron induced by a vector $\w \in \RR^n$, the
convex hull of permutations of $\w$:
\begin{equation}
\cP(\w) \coloneqq \conv(\{ \w_\sigma \colon \sigma \in \Sigma \}) \subset \RR^n.
\end{equation}
A well-known object in combinatorics \citep{bowman_1972,ziegler_2012}, the
permutahedron of $\w$ is a convex polytope, whose vertices correspond to
permutations of $\w$.  It is illustrated in Figure \ref{fig:permutahedron}.  In
particular, when $\w = \bre$, $\cP(\w) = \conv(\Sigma)$.  With this defined, we
can now derive linear programming formulations of sort and ranks.
\begin{proposition}{Linear programming formulations}
\label{prop:lp}

For all $\btheta \in \RR^n$ and
$\bre \coloneqq (n, n-1, \dots, 1)$, we have
\begin{align}
s(\btheta) 
&= \argmax_{\y \in \cP(\btheta)} \langle \y, \bre \rangle 
\label{eq:s_lp}, \textrm{ and} \\
r(\btheta) 
&= \argmax_{\y \in \cP(\bre)} \langle \y, -\btheta \rangle.
\label{eq:r_lp}
\end{align}
\end{proposition}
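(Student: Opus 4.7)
The plan is to reduce each linear program to a discrete optimization over $\Sigma$ and then invoke Lemma~\ref{lemma:discrete}. The key observation is that a linear functional on a convex polytope attains its maximum at a vertex, and the vertices of $\cP(\w) = \conv(\{\w_\sigma : \sigma \in \Sigma\})$ are exactly the (distinct) permutations of $\w$ (standard: no $\w_\sigma$ can be written as a strict convex combination of the other permutations of $\w$, so all of them must be extreme points of the convex hull).

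For \eqref{eq:s_lp}, the optimum of $\y \mapsto \langle \y, \bre \rangle$ on $\cP(\btheta)$ is attained at some vertex $\btheta_\sigma$, so the LP is equivalent to $\argmax_{\sigma \in \Sigma} \langle \btheta_\sigma, \bre \rangle$. By Lemma~\ref{lemma:discrete} this argmax is $\sigma(\btheta)$, and hence the optimal $\y^* = \btheta_{\sigma(\btheta)} = s(\btheta)$ by the definition of the sort operation.

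For \eqref{eq:r_lp}, the same vertex argument reduces the LP to $\tau^* = \argmax_{\tau \in \Sigma} \langle \bre_\tau, -\btheta \rangle$ with optimal vector $\y^* = \bre_{\tau^*}$. Since $\langle \bre_\tau, -\btheta \rangle = \langle -\btheta, \bre_\tau \rangle$, applying Lemma~\ref{lemma:discrete} with $-\btheta$ substituted for $\btheta$ identifies $\tau^* = r(-\btheta)$. The remaining step, which I expect to be the main technical nuisance, is to verify the identity $\bre_{r(-\btheta)} = r(\btheta)$. I would prove it by combining two elementary observations: sign reversal flips descending ranks, i.e.\ $r(-\btheta)_j = n+1 - r(\btheta)_j$, and $\bre_k = n+1-k$ by definition; composing these yields $(\bre_{r(-\btheta)})_j = n+1 - r(-\btheta)_j = r(\btheta)_j$, which closes the argument.

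The only genuine subtlety I anticipate is keeping the sign and permutation bookkeeping consistent in the rank case, where the direction vector $-\btheta$ and the polytope generator $\bre$ play roles swapped relative to the sort case. A minor caveat throughout is that when $\btheta$ has ties, $r(\btheta)$ depends on a tie-breaking convention and the set of LP optima enlarges to a face of $\cP(\bre)$; the proposition should be read as holding for any consistent tie-breaking rule.
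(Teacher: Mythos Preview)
Your proposal is correct and follows essentially the same route as the paper: reduce the LP to its vertex set via the fundamental theorem of linear programming, invoke Lemma~\ref{lemma:discrete}, and for the rank case close with the permutation identity linking $\bre_{r(\pm\btheta)}$ and $r(\mp\btheta)$. The only cosmetic difference is that the paper first derives $\bre_{r(\btheta)} = \argmax_{\y \in \cP(\bre)} \langle \btheta, \y \rangle$ and then substitutes $\btheta \mapsto -\btheta$ using the chain $\bre_{r(\btheta)} = \bre_{\sigmainv(\btheta)} = \sigmainv(-\btheta) = r(-\btheta)$, whereas you apply Lemma~\ref{lemma:discrete} directly to $-\btheta$ and verify $\bre_{r(-\btheta)} = r(\btheta)$ via the explicit coordinate computation $n+1-r(-\btheta)_j = r(\btheta)_j$; these are the same identity up to relabeling.
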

A proof is provided in \S\ref{appendix:proof_prop_lp}.  The key idea is to
perform a change of variable to ``absorb'' the permutation in
\eqref{eq:sigma_discrete} and \eqref{eq:r_discrete} into a
permutahedron. From the fundamental theorem of linear programming \citep[Theorem
6]{dantzig}, an optimal solution of a linear program is almost surely achieved
at a vertex of the convex polytope, a permutation in the case of the
permutahedron.  Interestingly, $\btheta$ appears in the
constraints and $\bre$ appears in the objective for sorting, while this is the
opposite for ranking.
\begin{figure}[t]
    \centering
    \includegraphics[height=4.5cm]{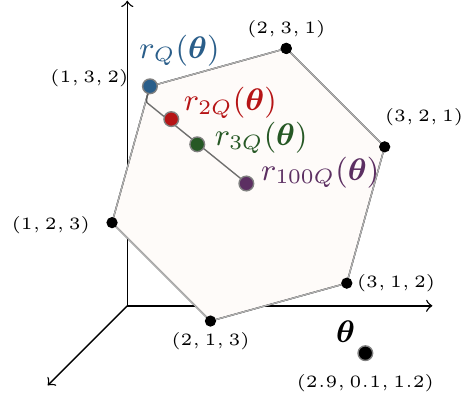} 
    \caption{{\bf Illustration of the permutahedron} $\cP(\bre)$, whose vertices
        are permutations of $\bre = (3, 2, 1)$. In this example, the
        ranks of $\btheta = (2.9, 0.1, 1.2)$ are $r(\btheta) = (1, 3, 2)$.
        In this case, our proposed soft rank $r_{\varepsilon Q}(\btheta)$ with
        $\varepsilon = 1$ is exactly equal to $r(\btheta)$.  
        When $\varepsilon \to \infty$, $r_{\varepsilon Q}(\btheta)$ converges
        towards the centroid of the permutahedron. The gray line indicates the
        regularization path of $r_{\varepsilon Q}(\btheta)$ between these two
        regimes, when varying $\varepsilon$.
}
\label{fig:permutahedron}
\end{figure}

\paragraph{Differentiability a.e.\ of sorting.}

For $s(\btheta)$, the fact that $\btheta$ appears in the linear program
constraints
makes $s(\btheta)$ piecewise linear and thus differentiable almost everywhere. 
When $\sigma(\btheta)$ is unique at $\btheta$, $s(\btheta) =
\btheta_{\sigma(\btheta)}$ is differentiable at $\btheta$ and its Jacobian
is the permutation matrix associated with $\sigma(\btheta)$.
When $\sigma(\btheta)$ is not unique, we can choose any matrix in
Clarke’s generalized Jacobian, i.e., any convex combination
of the permutation matrices associated with $\sigma(\btheta)$.

\paragraph{Lack of useful Jacobian of ranking.}

On the other hand, for $r(\btheta)$, since $\btheta$ appears in the objective, a
small perturbation to $\btheta$ may cause the solution of the linear program to
jump to another permutation of $\bre$.  This makes $r(\btheta)$ a discontinuous,
piecewise constant function. This means that $r(\btheta)$ has null or undefined
partial derivatives, preventing its use within a neural network trained with
backpropagation.

\section{Differentiable sorting and ranking}
\label{sec:soft_ops}

\begin{figure}[t]
    \centering
    \includegraphics[height=4.5cm]{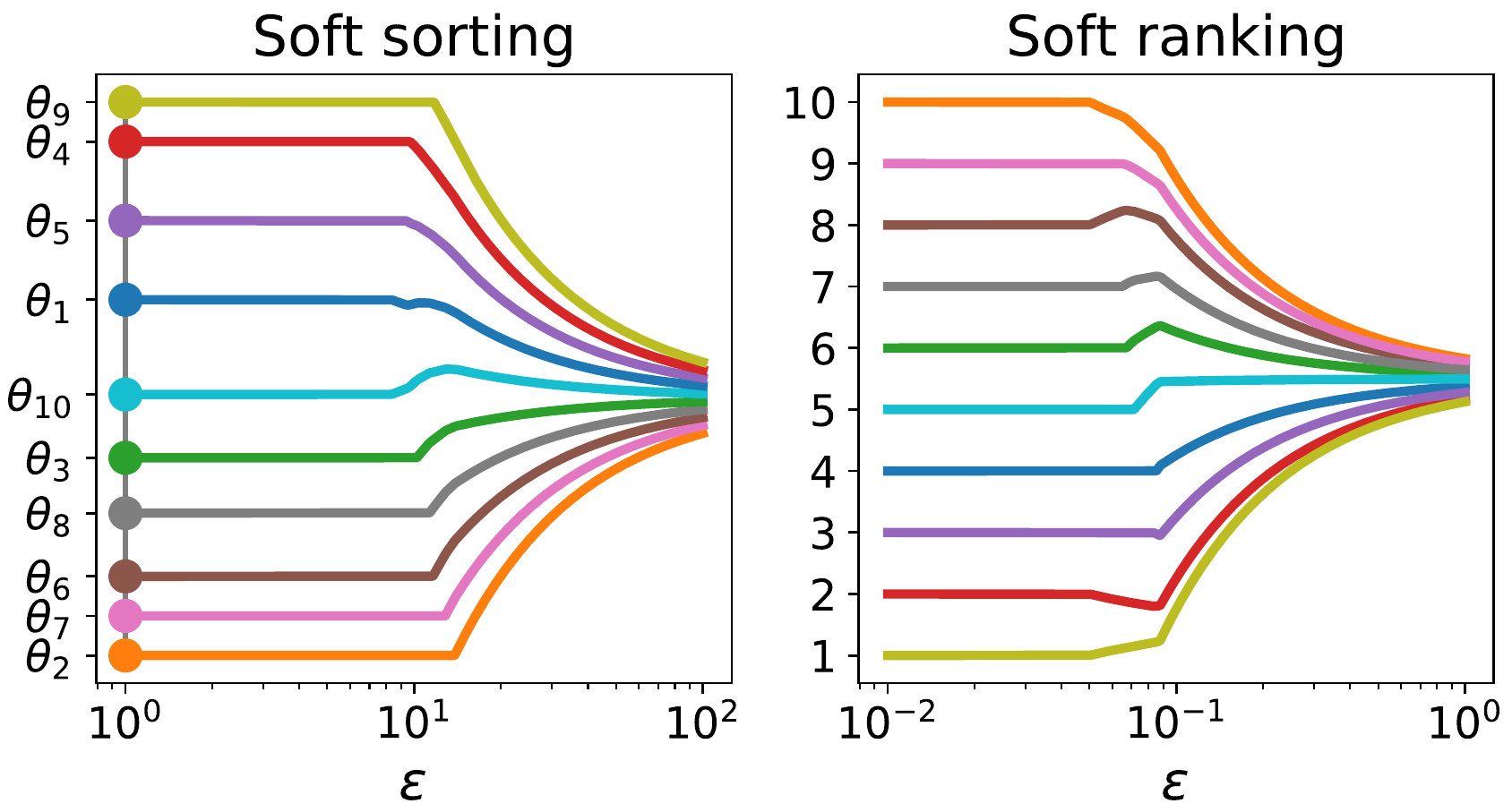} 
    \caption{Illustration of the soft sorting and ranking operators,
	    $s_{\varepsilon \Psi}(\btheta)$ and $r_{\varepsilon \Psi}(\btheta)$
	    for $\Psi=Q$; the results with $\Psi=E$ are similar. 
	    When $\varepsilon \to 0$, they converge to their ``hard''
	    counterpart. When $\varepsilon \to \infty$, they collapse into a
	    constant, as proven in Prop.\ref{prop:properties}.}
	    \label{fig:soft-illustration}
\end{figure}

As we have already motivated, our primary goal is the design of efficiently
computable approximations to the sorting and ranking operators, that would
smoothen the numerous kinks of the former, and provide useful derivatives for
the latter.  We achieve this by introducing strongly convex regularization in our linear
programming formulations. This turns them into efficiently computable projection
operators, which are differentiable and amenable to formal analysis.  
\paragraph{Projection onto the permutahedron.} 

Let $\z, \w \in \RR^n$ and consider the linear program  
$\argmax_{\bmu \in \cP(\w)} \langle \bmu, \z \rangle$. 
Clearly, we can express $s(\btheta)$ by setting $(\z,\w) = (\bre,\btheta)$ and
$r(\btheta)$ by setting $(\z,\w) = (-\btheta,\bre)$.  Introducing quadratic
regularization $Q(\bmu) \coloneqq \frac{1}{2} \|\bmu\|^2$ 
is considered by \citet{sparsemax} over the unit simplex
and by \citet{sparsemap} over marginal polytopes.
Similarly, adding $Q$ to our linear program over the permutahedron gives
\begin{equation}
P_Q(\z, \w) \coloneqq 
\argmax_{\bmu \in \cP(\w)} \langle \z, \bmu \rangle - Q(\bmu)
= \argmin_{\bmu \in \cP(\w)} \frac{1}{2} \|\bmu - \z\|^2,
\end{equation}
i.e., the Euclidean projection of $\z$ onto $\cP(\w)$.
We also consider entropic regularization
$E(\bmu) \coloneqq \langle \bmu, \log \bmu - \ones \rangle$, popularized in the
optimal transport literature \citep{cuturi2013sinkhorn,peyre_2017}. 
Subtly, we define
\begin{align}
P_E(\z, \w) 
&\coloneqq \log \argmax_{\bmu \in \cP(e^\w)} \langle \z, \bmu \rangle -
E(\bmu) \\
&= \log \argmin_{\bmu \in \cP(e^\w)}
\textnormal{KL}(\bmu, e^{\z}),
\end{align}
where 
$\text{KL}(\bm{a}, \bm{b}) \coloneqq \sum_i a_i \log \frac{a_i}{b_i} - \sum_i
a_i + \sum_i b_i$ is the Kullback-Leibler (KL) divergence between two positive
measures $\bm{a} \in \RR^n_+$ and $\bm{b} \in \RR^n_+$.  $P_E(\z, \w)$ is
therefore the \textit{log} KL projection of $e^{\z}$ onto $\cP(e^\w)$.
The purpose of $e^\w$ is to ensure that $\bmu$ always belongs
to $\dom(E) = \RR_+^n$ (since $\bmu$ is a convex combination of the permutations
of $e^\w$) and that of the logarithm is to map $\bmu^\star$ back to $\RR^n$. 

More generally, we can use any strongly convex regularization $\Psi$ under mild
conditions. For concreteness, we focus our exposition in the main text on $\Psi
\in \{Q,E\}$. We state all our propositions for these two cases
and postpone a more general treatment to the appendix.

\paragraph{Soft operators.} 

We now build upon these projections to define soft sorting and ranking
operators.  To control the regularization strength, we introduce a parameter
$\varepsilon > 0$ which we multiply $\Psi$ by (equivalently, divide $\z$ by).

For sorting, we choose
$(\z,\w)=(\bre,\btheta)$ and therefore
define the $\Psi$-regularized soft sort as 
\begin{equation}
s_{\varepsilon \Psi}(\btheta) 
\coloneqq P_{\varepsilon \Psi}(\bre, \btheta)
= P_{\Psi}(\nicefrac{\bre}{\varepsilon}, \btheta).
\label{eq:soft_sort}
\end{equation}

\begin{figure*}[t]
    \centering
    \includegraphics[width=0.95\textwidth]{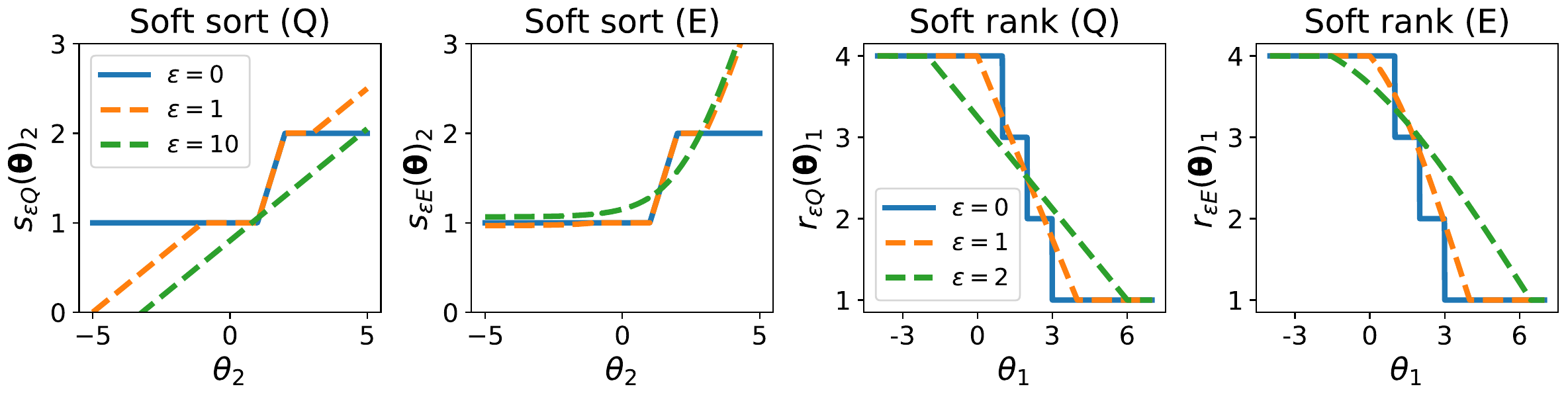} 
    \caption{{\bf Effect of the regularization parameter $\varepsilon$.}
We take the vector $\btheta \coloneqq (0, 3, 1, 2)$, vary one of its coordinates
$\theta_i$ and look at how $[s_{\varepsilon \Psi}(\btheta)]_i$ and
$[r_{\varepsilon \Psi}(\btheta)]_i$ change in response.
For soft sorting with $\Psi=Q$, the function is still piecewise linear, like
sorting. However, by increasing $\varepsilon$ we reduce the number of kinks,
and the function eventually converges to a mean (Proposition
\ref{prop:properties}). With $\Psi=E$, the function tends to be even smoother.
For soft ranking with $\Psi=Q$, the function is piecewise
linear instead of piecewise constant for the ``hard'' ranks. With $\Psi=E$, the function again tends to
be smoother though it may contain kinks.
}
    \label{fig:smoothness}
\end{figure*}

For ranking, we choose $(\z,\w)=(-\btheta, \bre)$ and therefore define the
$\Psi$-regularized soft rank as
\begin{equation}
r_{\varepsilon \Psi}(\btheta) 
\coloneqq P_{\varepsilon \Psi}(-\btheta, \bre)
= P_\Psi(\nicefrac{-\btheta}{\varepsilon}, \bre).
\label{eq:soft_rank}
\end{equation}
We illustrate the behavior of both of these soft operations as we vary
$\varepsilon$ in \Cref{fig:soft-illustration,fig:smoothness}.
As for the hard versions, the ascending-order soft sorting and
ranking are obtained by negating the input as
$-s_{\varepsilon \Psi}(-\btheta)$ and $r_{\varepsilon \Psi}(-\btheta)$,
respectively.

\paragraph{Properties.}

We can further characterize these approximations.
Namely, as we now formalize, they are differentiable a.e., and not only
converge to the their ``hard'' counterparts, but also satisfy some of their
properties for all $\varepsilon$.
\begin{proposition}{Properties of $s_{\varepsilon \Psi}(\btheta)$ and
	$r_{\varepsilon \Psi}(\btheta)$}\label{prop:properties}

\vspace{-0.5cm}
~ 
\begin{enumerate}[topsep=0pt,itemsep=3pt,parsep=3pt,leftmargin=15pt]
\item {\bf Differentiability.}
For all $\varepsilon > 0$, $s_{\varepsilon \Psi}(\btheta)$ and $r_{\varepsilon
\Psi}(\btheta)$ are differentiable (a.e.) w.r.t.\ $\btheta$.

\item {\bf Order preservation.} 
Let $\s \coloneqq s_{\varepsilon \Psi}(\btheta)$,
$\r \coloneqq r_{\varepsilon \Psi}(\btheta)$ and
$\sigma \coloneqq \sigma(\btheta)$.
For all $\btheta \in \RR^n$ and $0 < \varepsilon < \infty$, we have
$s_1 \ge s_2 \ge \dots \ge s_n$
and 
$r_{\sigma_1} \le r_{\sigma_2} \le \dots \le r_{\sigma_n}$.

\item {\bf Asymptotics.} 
For all $\btheta \in \RR^n$ without ties:
\begin{equation}
\begin{aligned}
s_{\varepsilon \Psi}(\btheta) 
&\xrightarrow[\varepsilon \to 0]{} s(\btheta)
&r_{\varepsilon \Psi}(\btheta) 
&\xrightarrow[\varepsilon \to 0]{} r(\btheta) \\
&\xrightarrow[\varepsilon \to \infty]{} f_\Psi(\btheta) \ones
&&\xrightarrow[\varepsilon \to \infty]{} f_\Psi(\bre) \ones,
\end{aligned}
\end{equation}
where
$f_Q(\u) \coloneqq \text{mean}(\u)$,
$f_E(\u) \coloneqq \log f_Q(\u)$.
\end{enumerate}
\end{proposition}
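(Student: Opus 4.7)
The plan splits naturally into the three parts of the statement, with the structural features of the regularized projection doing most of the work in each.

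For \textbf{differentiability}, I would exploit that, for any $\varepsilon>0$, the objective $\langle\z,\bmu\rangle-\varepsilon\Psi(\bmu)$ is strongly concave on $\cP(\w)$ (both $Q$ and $E$ are strongly convex on their respective domains), so the projection is unique and continuous in $\btheta$ by standard parametric-optimization results. Since $\cP(\w)$ is a polytope, the set of tight facet constraints at the optimum is locally constant on an open set of full Lebesgue measure; on each such region, the KKT system yields a smooth dependence of $\bmu^\star$ on $\btheta$ via the implicit function theorem. For $\Psi=Q$ this in fact gives a piecewise affine function; for $\Psi=E$ strong convexity keeps $\bmu^\star$ in the interior of $\RR_+^n$, so the outer $\log$ in the definition of $P_E$ preserves smoothness.

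For \textbf{order preservation}, I would use a swap argument. Let $\bmu^\star$ denote the minimizer defining $s_{\varepsilon\Psi}(\btheta)$, and suppose toward a contradiction that $\mu^\star_i<\mu^\star_{i+1}$ for some $i$. Let $\bmu'$ be $\bmu^\star$ with its $i$-th and $(i+1)$-th coordinates swapped. Coordinate-permutation invariance of $\cP(\btheta)$ gives $\bmu'\in\cP(\btheta)$, and symmetry of $Q$ and $E$ gives $\Psi(\bmu')=\Psi(\bmu^\star)$. Since $\bre$ is strictly decreasing,
\begin{equation*}
\langle\bre/\varepsilon,\bmu'-\bmu^\star\rangle
=\tfrac{1}{\varepsilon}(\rho_i-\rho_{i+1})(\mu^\star_{i+1}-\mu^\star_i)>0,
\end{equation*}
contradicting optimality; the outer $\log$ in the $E$ case then transports the inequalities through by monotonicity. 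The analogous claim for $r_{\varepsilon\Psi}$ follows from the same swap applied to positions $\sigma_i$ and $\sigma_{i+1}$, using that $-\btheta$ is non-decreasing along the order $\sigma(\btheta)$ (any remaining ties can be absorbed into the arbitrary tie-breaking of $\sigma$).

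For \textbf{asymptotics}, I would treat the two regimes separately. As $\varepsilon\to 0$, the regularized objective converges uniformly on the compact permutahedron to the linear objective of Proposition~\ref{prop:lp}, whose maximizer is the unique vertex $s(\btheta)$ (resp.\ $r(\btheta)$) by the no-ties hypothesis; upper-semicontinuity of $\argmax$ under uniform convergence then delivers the claimed limit. As $\varepsilon\to\infty$, dividing the objective by $\varepsilon$ reduces the problem to $\argmin_{\bmu\in\cP(\w)}\Psi(\bmu)$, with $\w=\btheta$ for the sort and $\w=\bre$ for the rank. Since $\sum_i\mu_i$ is constant equal to $\sum_i w_i$ throughout $\cP(\w)$, strict convexity of $\|\cdot\|^2$ (resp.\ of $x\mapsto x\log x$) combined with Jensen's inequality forces the unique minimizer to be the uniform point $\text{mean}(\w)\ones$ (resp.\ $\text{mean}(e^\w)\ones$), which lies in $\cP(\w)$ by its symmetry under the symmetric group. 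Taking the outer $\log$ in the $E$ case returns $\log\text{mean}(e^\w)\ones=f_E(\w)\ones$.

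The most delicate step is the $\varepsilon\to 0$ limit, where one must rule out the soft optimum oscillating between distinct LP optima; this is exactly what the no-ties assumption on $\btheta$ (via uniqueness of the LP vertex in Proposition~\ref{prop:lp}) is designed to preclude. The differentiability argument around active-set transitions is routine in principle but requires invoking Clarke's generalized Jacobian on a measure-zero set, as the paper already notes for hard sort.
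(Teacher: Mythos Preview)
Your proposal is correct in substance, but the route differs from the paper's in each of the three parts, and most notably in the asymptotics.

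For \textbf{differentiability}, the paper bypasses KKT/implicit-function arguments entirely: it observes that strong convexity of $\Psi$ makes $\bmu^\star(\z)=\argmax_{\bmu\in\cC}\langle\bmu,\z\rangle-\Psi(\bmu)$ Lipschitz in $\z$, invokes Rademacher's theorem for a.e.\ differentiability, and then composes with the smooth map $\nabla\Psi$. This is shorter and avoids any discussion of active sets. Your active-set argument is fine in principle, but note that for $s_{\varepsilon\Psi}$ the polytope $\cP(\btheta)$ itself moves with $\btheta$, so the KKT system you would differentiate has right-hand sides depending on $s(\btheta)$; this is still piecewise smooth, but requires an extra sentence that you have not written.

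For \textbf{order preservation}, the paper simply cites an external result (Proposition~1 of \citet{fy_losses_journal}) that $\bmu^\star(\z)$ is sorted the same way as $\z$, then pushes through the monotone $\nabla\Psi$. Your direct swap argument is essentially the proof of that cited result, so your version is more self-contained.

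For the \textbf{asymptotics}, the paper does something strictly stronger than you do: rather than a limiting argument, it uses the reduction to isotonic optimization (Proposition~\ref{prop:projection}) and the block structure of PAV to exhibit explicit thresholds $\epsmin(s(\z),\w)$ and $\epsmax(s(\z),\w)$ such that for $\varepsilon\le\epsmin$ the soft operator \emph{equals} its hard counterpart, and for $\varepsilon>\epsmax$ it \emph{equals} the collapsed closed form. Your uniform-convergence-of-objectives plus uniqueness-of-LP-vertex argument is perfectly valid and conceptually cleaner, but it yields only the limit, not the finite-$\varepsilon$ exactness. A small slip: in your $\varepsilon\to\infty$ computation you obtain $\log\operatorname{mean}(e^{\w})\,\ones$ and then identify it with $f_E(\w)\ones$; with $f_E(\u)\coloneqq\log\operatorname{mean}(\u)$ as stated this identification is off by the missing exponential (the paper's own closed form via LSE agrees with your computed value, suggesting the discrepancy lies in the stated definition of $f_E$ rather than in your argument).
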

The last property describes the behavior as $\varepsilon \to 0$ and $\varepsilon
\to \infty$.  Together with the proof of \Cref{prop:properties}, we include in
\S\ref{appendix:prop_soft_sort_rank} a slightly stronger result. Namely, we
derive an explicit value of $\varepsilon$ below which our operators are exactly
equal to their hard counterpart, and a value of $\varepsilon$ above which our
operators can be computed in closed form.

\paragraph{Convexification effect.}

Proposition \ref{prop:properties} shows that $[s_{\varepsilon \Psi}(\btheta)]_i$
and $[r_{\varepsilon \Psi}(\btheta)]_i$ for all $i \in [n]$ converge to convex
functions of $\btheta$ as $\varepsilon \to \infty$. This suggests that larger
$\varepsilon$ make the objective function increasingly easy to optimize
(at the cost of departing from ``hard'' sorting or ranking).
This behavior is also visible in \Cref{fig:smoothness}, where
$[s_{\varepsilon Q}(\btheta)]_2$ converges towards the mean $f_Q$, depicted by a
straight line.

\paragraph{On tuning $\varepsilon$ (or not).}

The parameter $\varepsilon > 0$ controls the trade-off between approximation
of the original operator and ``smoothness''. When the model $g(\x)$ producing
the scores or ``logits'' $\btheta$ to be sorted/ranked is a homogeneous
function, from \eqref{eq:soft_sort} and \eqref{eq:soft_rank},
$\varepsilon$ can be absorbed into the model. In our label ranking experiment,
we find that indeed tuning $\varepsilon$ is not necessary to achieve excellent
accuracy. On the other hand, for top-$k$ classification, we find that applying a
logistic map to squash $\btheta$ to $[0,1]^n$ and tuning $\varepsilon$ is
important, confirming the empirical finding of \citet{cuturi_2019}.

\paragraph{Relation to linear assignment formulation.}

We now discuss the relation between our proposal and a formulation
based on the Birkhoff polytope $\cB \subset \RR^{n \times n}$, the convex
hull of permutation matrices. 
Our exposition corresponds to the method of \citet{cuturi_2019} with $m=n$.
Note that using the change of variable $\y = \P \bre$ and $\cP(\bre) = \cB
\bre$, we can rewrite \eqref{eq:r_lp} as $r(\btheta) = \P(\btheta) \bre$, where
\begin{equation}
\P(\btheta) 
\coloneqq \argmax_{\P \in \cB} ~ \langle \P \bre, -\btheta \rangle.
\end{equation}
Let $D(\a, \b) \in \RR^{n \times n}$ be a distance matrix.
Simple calculations show that if
$[D(\a, \b)]_{i,j} \coloneqq \frac{1}{2} (a_i - b_j)^2$, then
\begin{equation}
\P(\btheta) 
= \argmin_{\P \in \cB} ~ \langle \P, D(-\btheta, \bre) \rangle.
\end{equation}
Similarly, we can rewrite \eqref{eq:s_lp} as $s(\btheta) = \P(\btheta)^\top
\btheta$. To obtain a differentiable operator, \citet{cuturi_2019} (see also
\citep{sinkprop}) propose to replace the permutation matrix $\P(\btheta)$ by a
doubly stochastic matrix $\P_{\varepsilon
E}(\btheta) \coloneqq \argmin_{\P \in \cB} ~ \langle \P, D(-\btheta,
\bre) \rangle + \varepsilon E(\P)$, which is computed approximately in $O(T
n^2)$ using Sinkhorn (\citeyear{sinkhorn}). In comparison, our approach is based
on regularizing $\y = \P \bre$ with $\Psi \in \{Q,E\}$ directly, the key to
achieve $O(n \log n)$ time and $O(n)$ space complexity, as we now show.

\section{Fast computation and differentiation}
\label{sec:fast_comp_diff}

As shown in the previous section, computing our soft sorting and ranking
operators boils down to projecting onto a permutahedron.
Our key contribution in this section is the derivation of an $O(n \log n)$
forward pass and an $O(n)$ backward pass (multiplication with the Jacobian)
for these projections. Beyond soft sorting and ranking,
this is an important sensitivity analysis question in its own right.

\paragraph{Reduction to isotonic optimization.}

We now show how to reduce the projections to isotonic optimization, i.e.,
with simple chain constraints, which is the key to fast computation and
differentiation. We will w.l.o.g.\ assume that $\w$ is sorted in descending
order (if not the case, we sort it first).
\begin{proposition}{Reduction to isotonic optimization}
\label{prop:projection}

For all $\z \in \RR^n$ and sorted $\w \in \RR^n$ we have
\begin{equation}
    P_\Psi(\z, \w) = \z - \v_\Psi(\z_{\sigma(\z)}, \w)_{\sigmainv(\z)}
\end{equation}
where
\begin{align}
\v_Q(\s, \w) &\coloneqq
\argmin_{v_1 \ge \dots \ge v_n} \frac{1}{2} \|\v - (\s - \w)\|^2, \textrm{ and}\\
\v_E(\s, \w) &\coloneqq
\argmin_{v_1 \ge \dots \ge v_n} 
\langle e^{\s - \v}, \ones \rangle +  \langle e^\w,
\v \rangle.
\end{align}
\end{proposition}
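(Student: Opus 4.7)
My plan is to reduce the projection to the case where the point being projected is sorted in descending order (matching $\w$), and then apply Lagrangian duality to expose the isotonic structure of the dual. Let $\sigma \coloneqq \sigma(\z)$ and $\s \coloneqq \z_\sigma$, so $\s$ is sorted descending. Since $\cP(\w)$ (resp.\ $\cP(e^\w)$) is permutation-invariant and both the Euclidean norm and the $\textnormal{KL}$ divergence are invariant under a common permutation of their arguments, the substitution $\bnu \coloneqq \bmu_\sigma$ transforms the projection of $\z$ onto $\cP(\w)$ (resp.\ of $e^\z$ onto $\cP(e^\w)$) into the same projection problem with $\z$ replaced by $\s$, and the original optimum is recovered via $\bmu^\star = \bnu^\star_{\sigmainv}$.

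Next I would run a swap (rearrangement) argument showing that $\bnu^\star$ is itself sorted in descending order. Exchanging two adjacent entries $\nu_i, \nu_{i+1}$ leaves $\bnu$ in the same permutahedron (same multiset of coordinates), and a direct calculation shows that both $\tfrac{1}{2}\|\bnu - \s\|^2$ and $\textnormal{KL}(\bnu, e^\s)$ change by exactly $(s_i - s_{i+1})(\nu_i - \nu_{i+1})$ under the swap. Thus any inversion $\nu^\star_i < \nu^\star_{i+1}$ occurring where $s_i > s_{i+1}$ would strictly decrease the objective, contradicting optimality. Once $\bnu^\star$ is known to be sorted descending, I would invoke the classical majorization description of the permutahedron: for sorted $\bnu$ and sorted $\w$, $\bnu \in \cP(\w)$ iff $\sum_{i=1}^k \nu_i \le \sum_{i=1}^k w_i$ for $k = 1, \dots, n-1$, with equality for $k = n$.

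With these chain-like constraints, I would introduce multipliers $\lambda_1, \dots, \lambda_{n-1} \ge 0$ and $\lambda_n \in \RR$, and set $v_i \coloneqq \sum_{k \ge i} \lambda_k$, so that $v_1 \ge v_2 \ge \dots \ge v_n$ automatically. Stationarity of the Lagrangian yields $\nu_i = s_i - v_i$ for $\Psi = Q$ and $\nu_i = e^{s_i - v_i}$ for $\Psi = E$. Eliminating $\bnu$ and discarding terms independent of $\v$, the dual becomes the minimization of $\tfrac{1}{2}\|\v - (\s - \w)\|^2$ (for $\Psi = Q$) and of $\langle e^{\s - \v}, \ones\rangle + \langle e^\w, \v\rangle$ (for $\Psi = E$) over $v_1 \ge \dots \ge v_n$, which are exactly the problems defining $\v_Q(\s, \w)$ and $\v_E(\s, \w)$. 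Strong duality (the primal is strictly convex and strictly feasible) then implies $\bnu^\star = \s - \v^\star$ in the $Q$ case and $\log \bnu^\star = \s - \v^\star$ in the $E$ case, where $\v^\star = \v_\Psi(\s, \w)$.

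Unwinding the permutation reduction gives $P_\Psi(\z, \w) = (\s - \v^\star)_{\sigmainv} = \z - \v_\Psi(\s, \w)_{\sigmainv}$ in both cases, using $\s_{\sigmainv} = \z$, which is the stated identity. The main obstacle I anticipate is the careful treatment of the entropic case: one must ensure $\nu_i > 0$ throughout so that $\log \nu_i$ is well defined, verify a constraint qualification to justify strong duality, and handle ties in $\z$ (by showing the final identity is invariant under the choice of tie-breaking $\sigma$). Once these are in place, the Lagrangian computation becomes routine and uniform across $\Psi \in \{Q, E\}$.
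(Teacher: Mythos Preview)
Your plan is correct and takes a genuinely different route from the paper. The paper works abstractly through Fenchel duality and submodular machinery: it first shows (Lemma~4) that the regularized LP dualizes to $\min_{\u} \Psi^*(\z-\u) + s_{\cC}(\u)$, then specializes $s_{\cC}$ to the Lov\'asz extension of the cardinality-based submodular function underlying $\cP(\w)$, uses Edmonds' greedy algorithm to linearize it on the chamber determined by a fixed ordering, and finally uses a rearrangement inequality (Lemma~3) to pin down that ordering as $\sigma(\z)$. You instead stay entirely at the level of the permutahedron: permutation invariance reduces to sorted $\s$, a swap argument shows the primal optimum $\bnu^\star$ is sorted, the Hardy--Littlewood--P\'olya majorization description replaces $\cP(\w)$ by $n$ prefix constraints, and ordinary Lagrangian duality with the telescoping substitution $v_i=\sum_{k\ge i}\lambda_k$ yields the isotonic problem directly. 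Your argument is more elementary and self-contained for the permutahedron; the paper's buys generality to arbitrary base polytopes $\cB(F)$.

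One step to tighten: after restricting to sorted $\bnu$ you introduce multipliers only for the prefix constraints and take \emph{unconstrained} stationarity in $\bnu$, which silently drops the monotonicity constraints $\nu_1\ge\dots\ge\nu_n$. Formally you are then projecting onto the larger polyhedron $M=\{\bnu:\sum_{i\le k}\nu_i\le\sum_{i\le k} w_i,\ \sum_i\nu_i=\sum_i w_i\}$, and you must check that this projection is itself sorted (hence lies in $\cP(\w)$ and coincides with $\bnu^\star$). This follows from complementary slackness: if $\nu_i<\nu_{i+1}$ then $v_i-v_{i+1}=\lambda_i>s_i-s_{i+1}\ge 0$, so the $i$th prefix constraint is tight; combining with feasibility at $i{-}1$ and $i{+}1$ gives $\nu_i\ge w_i\ge w_{i+1}\ge\nu_{i+1}$ (resp.\ $e^{w_i},e^{w_{i+1}}$ in the entropic case), a contradiction. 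With that verification added, your plan goes through in both cases.
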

The function $\v_Q$ is classically known as isotonic regression.
The fact that it can be used to solve the Euclidean projection onto $\cP(\w)$ 
has been noted several times \citep{orbit_regul,zeng_2014}.
The reduction of Bregman projections, which we use here, to isotonic
optimization was shown by \citet{projection_permutahedron}. 
Unlike that study, we use the KL projection of $e^{\z}$ onto
$\cP(e^\w)$, and not of $\z$ onto $\cP(\w)$, which simplifies many expressions.
We include in \S\ref{appendix:proof_forward} a
simple unified proof of \Cref{prop:projection}
based on Fenchel duality and tools from submodular optimization.
We also discuss an interpretation of adding regularization to the primal
linear program as relaxing the equality constraints of the dual linear program
in \S\ref{appendix:relaxed_dual_lp}.

\paragraph{Computation.}

As shown by \citet{best_2000_separable},
the classical pool adjacent violators (PAV) algorithm
for isotonic regression can be extended to minimize any
per-coordinate decomposable convex function $f(\v)=\sum_{i=1}^n f_i(v_i)$
subject to monotonicity constraints, which is exactly the form of the problems
in \Cref{prop:projection}.
The algorithm repeatedly splits the coordinates into
a set of contiguous blocks $\cB_1,\dots,\cB_m$ that partition $[n]$ (their
union is $[n]$  and $\max\cB_j+1=\min\cB_{j+1}$).
It only requires access to an oracle that solves for each block $\cB_j$ the
sub-problem $\gamma(\cB_j)=\argmin_{\gamma \in \RR} \sum_{i\in\cB_j} f_i(\gamma)$, 
and runs in \textbf{linear} time.
Further, the solution has a clean block-wise constant structure, namely it is
equal to $\gamma(\cB_j)$ within block $\cB_j$.
Fortunately, in our case, as shown in \S\ref{appendix:pav}, the function
$\gamma$ can be analytically computed, as
\begin{align}
\gamma_Q(\cB_j; \s, \w) &\coloneqq
\frac{1}{|\cB_j|} \sum_{i \in \cB_j} s_i - w_i 
\label{eq:gamma_quadratic}, \textrm{ and} \\
\gamma_E(\cB_j; \s, \w) &\coloneqq 
\log \sum_{i \in \cB_j} e^{s_i} - \log \sum_{i \in \cB_j} e^{w_i}.
\label{eq:gamma_entropic}
\end{align}
Hence, PAV returns an \textbf{exact} solution of both $\v_Q(\s, \w)$ and
$\v_E(\s, \w)$ in $O(n)$ time \citep{best_2000_separable}.
This means that we do not need to choose a number of iterations or a
level of precision, unlike with Sinkhorn.
Since computing $P_Q(\z, \w)$ and
$P_E(\z, \w)$ requires obtaining $\s = \z_{\sigma(\btheta)}$ beforehand, the
total computational complexity is $O(n \log n)$. 

\paragraph{Differentiating isotonic optimization.}
The block-wise structure of the solution also makes its derivatives easy to
analyze, despite the fact that we are differentiating the \emph{solution} of an
optimization problem.
Since the coordinates of the solution in block $\cB_j$ are all equal to
$\gamma(\cB_j)$, which in turn depends only on a subset of the
parameters, the Jacobian has a simple block-wise form, which we now formalize.
\begin{lemma}{Jacobian of isotonic optimization}
\label{lemma:Jacobian_isotonic}

Let $\cB_1,\dots,\cB_m$ be the ordered partition of $[n]$ induced by
$\v_\Psi(\s, \w)$ from Proposition \ref{prop:projection}. Then,
\begin{equation}
    \frac{\partial \v_\Psi(\s, \w)}{\partial \s} = 
\begin{bmatrix} 
    \mathbf{B}^\Psi_1 & \zeros & \zeros \\
    \zeros            & \ddots & \zeros \\
    \zeros            & \zeros & \mathbf{B}^\Psi_m
\end{bmatrix}
\in \RR^{n \times n},
\end{equation}
where
$\mathbf{B}_j^\Psi 
\coloneqq
\partial \gamma_\Psi(\cB_j; \s, \w) / \partial \s
\in \RR^{|\cB_j| \times |\cB_j|}$.
\end{lemma}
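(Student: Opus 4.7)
The plan is to exploit the block-wise constant structure of the PAV solution recalled in the discussion surrounding \Cref{prop:projection} together with the explicit block values \eqref{eq:gamma_quadratic}--\eqref{eq:gamma_entropic}. First I would argue that the partition $\cB_1,\dots,\cB_m$ is locally constant in $\s$ at generic $\s$. The PAV output is characterized by the strict chain $\gamma_\Psi(\cB_1;\s,\w) > \gamma_\Psi(\cB_2;\s,\w) > \dots > \gamma_\Psi(\cB_m;\s,\w)$ together with the fact that no further split of any $\cB_j$ is productive. Since each map $\s \mapsto \gamma_\Psi(\cB_j;\s,\w)$ is smooth in $\s$, any strict inequality persists on an open neighborhood of $\s$. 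Hence, away from the measure-zero set of $\s$ on which some adjacent pair $\gamma_\Psi(\cB_j;\s,\w) = \gamma_\Psi(\cB_{j+1};\s,\w)$, PAV returns the same partition for all nearby $\s'$, and on that neighborhood $[\v_\Psi(\s',\w)]_i = \gamma_\Psi(\cB_j;\s',\w)$ for $i \in \cB_j$.

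With the partition frozen locally, the Jacobian follows by direct differentiation. For $i \in \cB_j$ and any $k \in [n]$, the chain rule gives $\partial [\v_\Psi(\s,\w)]_i/\partial s_k = \partial \gamma_\Psi(\cB_j;\s,\w)/\partial s_k$. Now inspecting \eqref{eq:gamma_quadratic} and \eqref{eq:gamma_entropic}, the scalar $\gamma_\Psi(\cB_j;\s,\w)$ depends on $\s$ only through the subvector $\s_{\cB_j}$, so this partial vanishes whenever $k \notin \cB_j$. This yields the claimed block-diagonal structure, and the intra-block matrix $\mathbf{B}^\Psi_j$ is assembled from the $|\cB_j|$ identical rows $\nabla_{\s_{\cB_j}} \gamma_\Psi(\cB_j;\s,\w)$ (identical because every coordinate in $\cB_j$ inherits the same scalar value $\gamma_\Psi(\cB_j;\s,\w)$). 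Concretely for $\Psi=Q$ each block is $\mathbf{B}^Q_j = \frac{1}{|\cB_j|}\ones\ones^\top$, and for $\Psi=E$ each row of $\mathbf{B}^E_j$ is the softmax of $\s_{\cB_j}$.

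The main obstacle is the local constancy of the partition; the rest is a routine application of the chain rule to the closed forms. A convenient way to package this is via strong convexity of the separable objective in \Cref{prop:projection}, which makes the solution map $\s \mapsto \v_\Psi(\s,\w)$ Lipschitz continuous, so the equality pattern among adjacent coordinates is stable away from the measure-zero ``tie'' set described above. On that exceptional set the ordinary Jacobian need not exist, but the block-diagonal template still furnishes a valid element of the Clarke generalized Jacobian by taking any consistent refinement of the partition, so the a.e.\ claim in the lemma is unaffected.
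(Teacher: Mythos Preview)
Your proposal is correct and follows essentially the same approach as the paper: fix the PAV partition locally, use that $v_i=\gamma_\Psi(\cB_j;\s,\w)$ on each block, and differentiate the closed forms \eqref{eq:gamma_quadratic}--\eqref{eq:gamma_entropic} to obtain the block-diagonal Jacobian with $\mathbf{B}^Q_j=\ones\ones^\top/|\cB_j|$ and rows of $\mathbf{B}^E_j$ equal to $\text{softmax}(\s_{\cB_j})$. The only difference is that you spell out the local constancy of the partition via stability of strict inequalities between block values, whereas the paper simply asserts $v_i=\gamma_\Psi(\cB_l;\s,\w)$ and differentiates, relegating the non-differentiable ``tie'' case to a remark on Clarke generalized Jacobians, just as you do.
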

A proof is given in \S\ref{appendix:Jacobian_isotonic}.
The Jacobians w.r.t. $\w$ are
entirely similar, thanks to the symmetry of \eqref{eq:gamma_quadratic}
and \eqref{eq:gamma_entropic}.

\begin{figure*}[th]
    \centering
    \includegraphics[height=3.8cm]{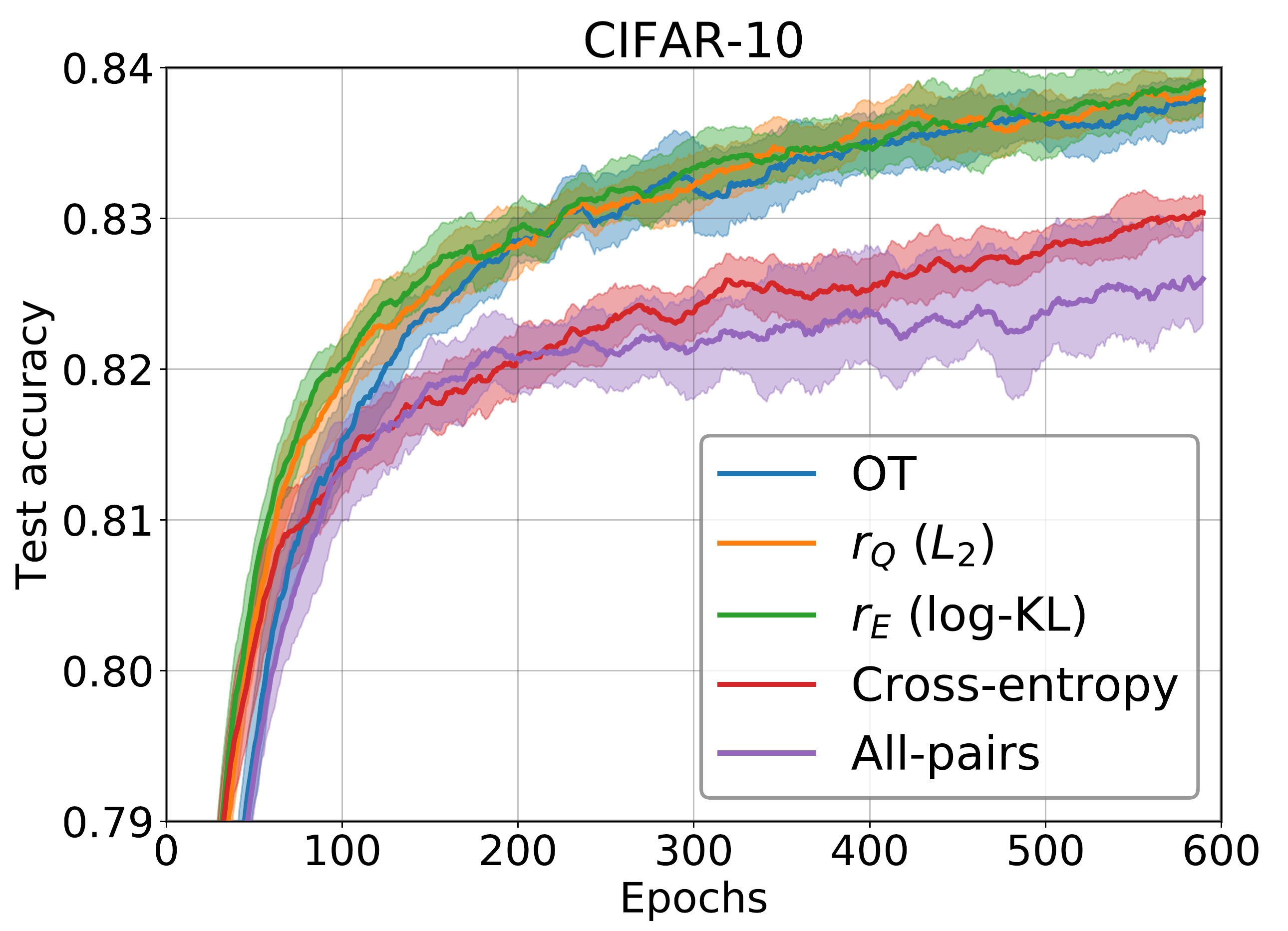} 
    \hspace{0.2cm}
    \includegraphics[height=3.8cm]{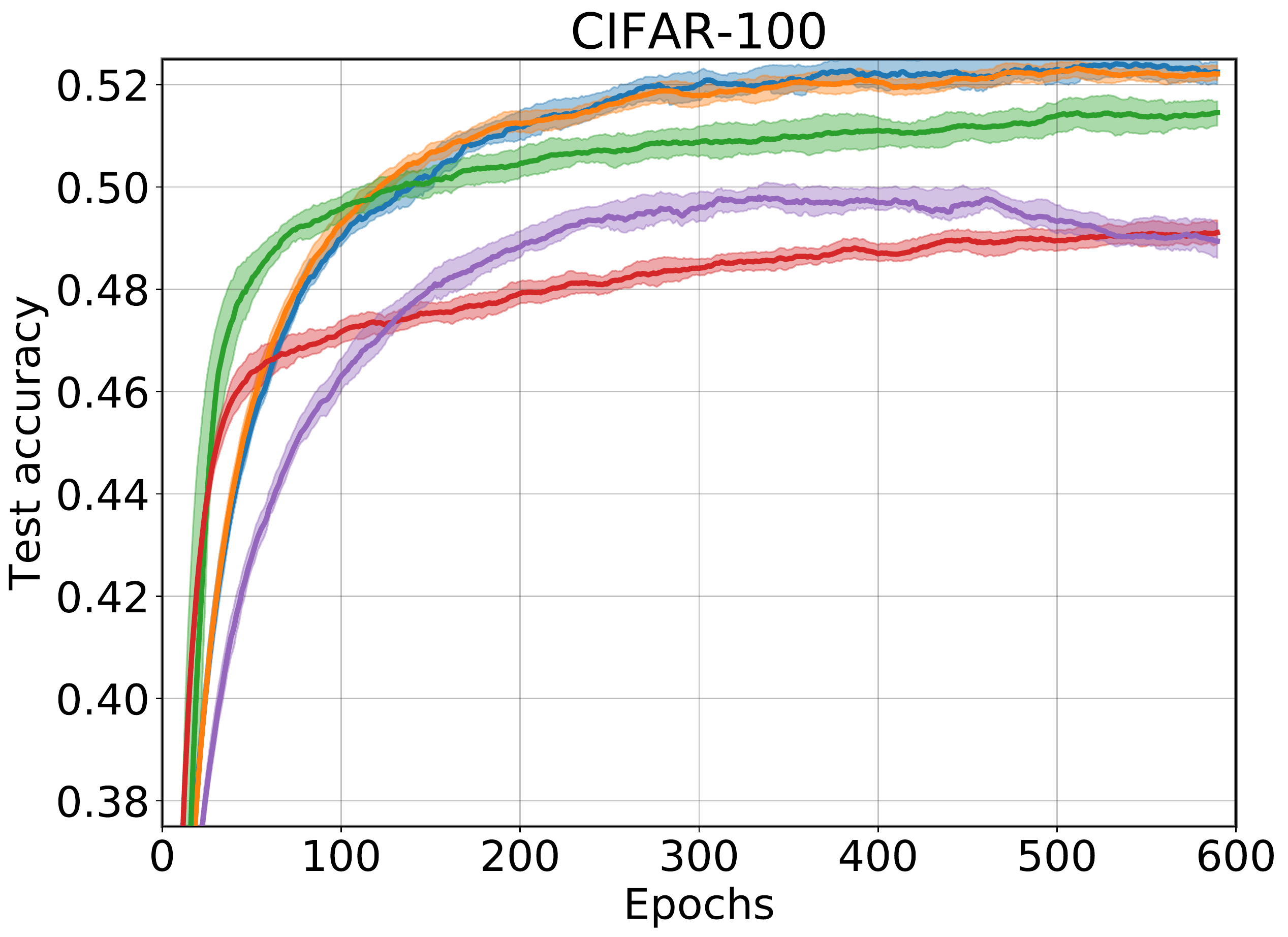} 
    \hspace{0.2cm}
    \includegraphics[height=3.8cm]{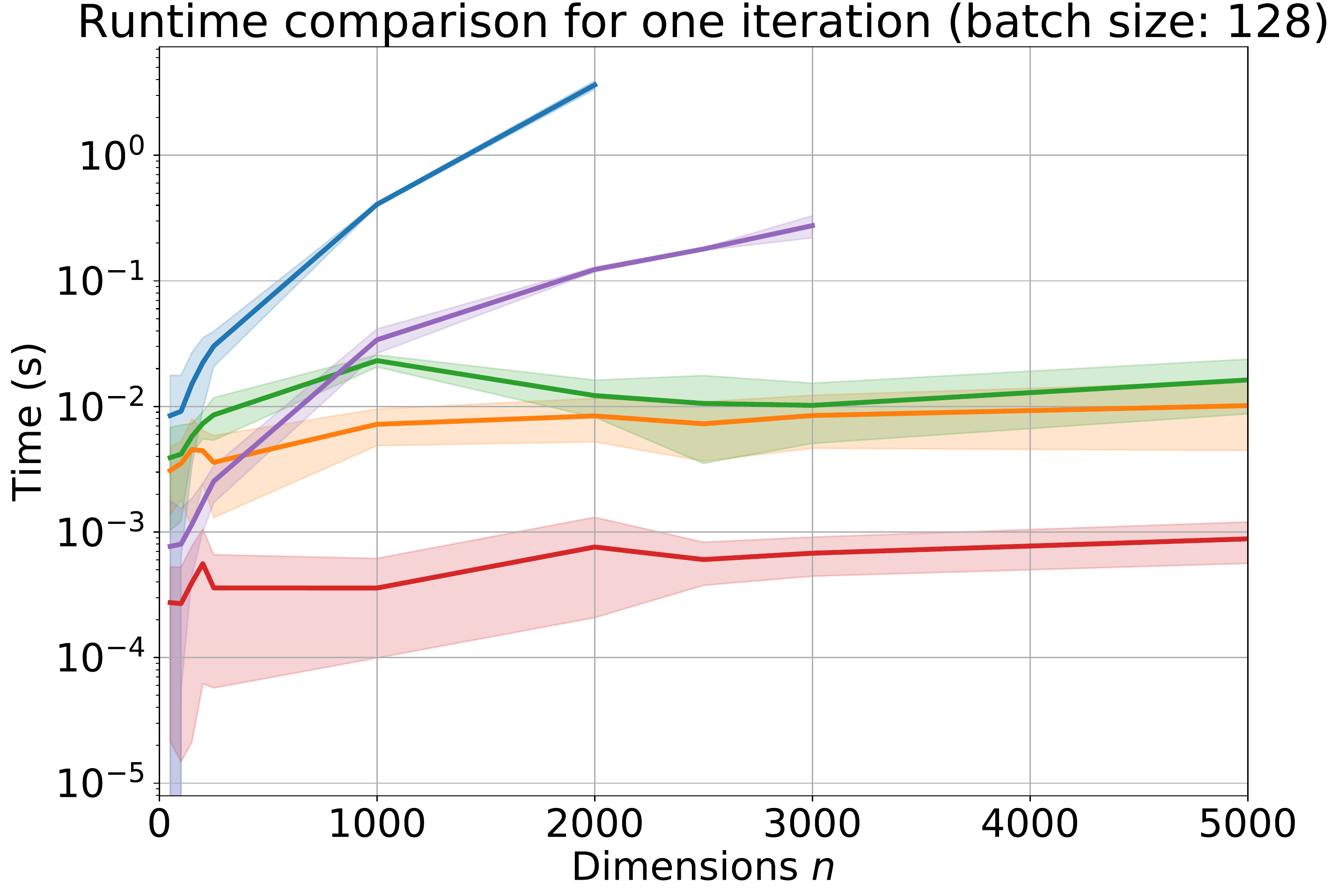} 
    \caption{{\bf Left, center:} Accuracy comparison on CIFAR-10, CIFAR-100
        ($n=10$, $n=100$).
        {\bf Right:} Runtime comparison for one batch computation with
        backpropagation disabled. OT and
        All-pairs go out-of-memory starting from $n=2000$ and $n=3000$,
        respectively. With backpropagation enabled, the runtimes are similar but
        OT and All-pairs go out-of-memory at $n=1000$ and $n=2500$,
        respectively.
}
    \label{fig:cifar}
\end{figure*}

In the quadratic regularization case, it was already derived by
\citet{djolonga_2017} that $\mathbf{B}^Q_j \coloneqq \ones / |\cB_j|$.
The multiplication with the Jacobian,
$\bnu \coloneqq
\frac{\partial \v_Q(\s, \w)}{\partial \s}
\u$ for some vector $\u$, can be computed as
$\bnu = (\bnu_1, \dots, \bnu_m)$, 
where $\bnu_j = \textnormal{mean}(\u_{\cB_j}) \ones \in \RR^{|\cB_j|}$.
In the entropic regularization case, novel to our knowledge,
we have $\mathbf{B}^E_j = \ones \otimes \textnormal{softmax}(\s_{\cB_j})$. 
Note that $\mathbf{B}^E_j$ is column-wise constant, so that
the multiplication with the Jacobian
$\bnu \coloneqq 
\frac{\partial \v_E(\s, \w)}{\partial \s} \u$,
can be computed as
$\bnu_j = \langle \textnormal{softmax}(\s_{\cB_j}), 
\u_{\cB_j}\rangle \ones \in \RR^{|\cB_j|}$.
In both cases, the multiplication with the Jacobian therefore takes $O(n)$ time.

There are interesting differences between the two forms
of regularization.
For quadratic regularization, the Jacobian only depends on the partition
$\cB_1,\dots,\cB_m$ (not on $\s$) and the blocks have constant value.
For entropic regularization, the Jacobian does depend on $\s$ and the blocks are
constant column by column.
Both formulations are averaging the incoming gradients, one uniformly
and the other weighted.

\paragraph{Differentiating the projections.}

We now combine \Cref{prop:projection} with \Cref{lemma:Jacobian_isotonic} to
characterize the Jacobians of the projections onto the permutahedron and show
how to multiply arbitrary vectors with them in linear time.
\begin{proposition}{Jacobian of the projections}
\label{prop:Jacobian_projection}
 
Let $P_\Psi(\z, \w)$ be defined in Proposition \ref{prop:projection}.
Then, 
\begin{equation}
\frac{\partial P_{\Psi}(\z, \w)}{\partial \z}
= \J_\Psi(\z_{\sigma(\z)}, \w)_{\sigmainv(\z)},
\end{equation}
where $\J_\pi$ is the matrix obtained by permuting the rows \underline{and}
columns of $\J$ according to $\pi$, and where
\begin{equation}
\J_\Psi(\s, \w) \coloneqq
\I - \frac{\partial \v_\Psi(\s, \w)}{\partial \s}.
\end{equation}
\end{proposition}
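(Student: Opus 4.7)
The plan is to apply the chain rule to the formula from Proposition \ref{prop:projection}, namely
$P_\Psi(\z, \w) = \z - \v_\Psi(\z_{\sigma(\z)}, \w)_{\sigmainv(\z)}$,
and to show that the two outer permutations (of the argument $\s = \z_{\sigma(\z)}$ and of the output, indexed by $\sigmainv(\z)$) combine into exactly the ``permute rows and columns by $\pi$'' operation denoted $(\cdot)_\pi$.

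First I would restrict attention to $\z$ with distinct coordinates. On this open, full-measure set, $\sigma(\z)$ is uniquely defined and, crucially, \emph{locally constant}: there is a neighborhood of $\z$ on which the argsort does not change. Consequently, the dependence of $\z_{\sigma(\z)}$ and of the post-permutation by $\sigmainv(\z)$ on $\z$ reduces, in a neighborhood of the point, to multiplication by \emph{fixed} permutation matrices. This removes the only conceptually delicate issue (differentiating through $\sigma(\z)$) and reduces the task to a routine application of the chain rule to the smooth composition $\z \mapsto \s = \z_\sigma \mapsto \v_\Psi(\s,\w) \mapsto \v_\Psi(\s,\w)_\pi$ with $\sigma,\pi := \sigmainv$ treated as constants.

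The second step is the chain-rule computation. Writing $s_j = z_{\sigma_j}$ gives $\partial s_j / \partial z_k = \delta_{j,\pi_k}$. Differentiating $P_\Psi(\z,\w)_i = z_i - \v_\Psi(\s,\w)_{\pi_i}$ entry-wise yields
\begin{equation*}
\frac{\partial P_\Psi(\z,\w)_i}{\partial z_k}
= \delta_{i,k} - \left[\frac{\partial \v_\Psi(\s,\w)}{\partial \s}\right]_{\pi_i,\pi_k}.
\end{equation*}
Using $\delta_{i,k} = \delta_{\pi_i,\pi_k}$, the right-hand side equals $[\I - \partial \v_\Psi(\s,\w)/\partial \s]_{\pi_i,\pi_k} = [\J_\Psi(\s,\w)]_{\pi_i,\pi_k}$, which by the definition of $(\cdot)_\pi$ is precisely $[\J_\Psi(\s,\w)_\pi]_{i,k}$. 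This matches the claimed formula.

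The main obstacle, rather than the calculation itself, is ensuring that $\v_\Psi$ is genuinely differentiable at $\s = \z_{\sigma(\z)}$, so that the chain rule applies. By Lemma \ref{lemma:Jacobian_isotonic} the Jacobian has a well-defined block structure determined by the PAV partition $\cB_1,\dots,\cB_m$, and this partition is locally stable whenever the pool-adjacent-violators stopping conditions hold strictly. For generic $\z$ (outside a measure-zero exceptional set where either ties in $\z$ or block-merge boundaries occur), all these conditions hold, so $P_\Psi$ is differentiable a.e.\ and the formula holds in the classical sense. On the exceptional set, the statement should be read as a choice of element in Clarke's generalized Jacobian, exactly as in the discussion of $s(\btheta)$ after Proposition \ref{prop:lp}.
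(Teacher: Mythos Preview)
Your proposal is correct and follows the same approach the paper intends: combine the reduction $P_\Psi(\z,\w)=\z-\v_\Psi(\z_{\sigma(\z)},\w)_{\sigmainv(\z)}$ from Proposition~\ref{prop:projection} with the Jacobian of $\v_\Psi$ from Lemma~\ref{lemma:Jacobian_isotonic} via the chain rule, treating $\sigma(\z)$ as locally constant. The paper does not spell out the entry-wise computation you give; it simply states that the result follows from this combination, so your write-up is in fact more detailed than the paper's own argument.
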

Again, the Jacobian w.r.t.\ $\w$ is entirely symmetric.
Unlike the Jacobian of isotonic optimization, the Jacobian of the
projection is not block diagonal, as we need to permute its rows and columns.
We can nonetheless multiply with it in linear time
by using the simple identity $(\J_\pi) \z = (\J \z_{\pi^{-1}})_\pi$, 
which allows us to reuse the $O(n)$ multiplication with the Jacobian of
isotonic optimization.

\paragraph{Differentiating $s_{\varepsilon \Psi}$ and $r_{\varepsilon \Psi}$.}

With the Jacobian of $P_\Psi(\z, \w)$ w.r.t. $\z$ and $\w$ at hand, 
differentiating $s_{\varepsilon \Psi}$ and $r_{\varepsilon \Psi}$
boils down to a mere application of the chain rule to \eqref{eq:soft_sort}
and \eqref{eq:soft_rank}. To summarize, we can multiply with the Jacobians
of our soft operators in $O(n)$ time and space.

\section{Experiments}
\label{sec:experiments}

We present in this section our empirical findings. 
NumPy, JAX, PyTorch and Tensorflow versions of our sorting and ranking operators
are available at
\url{https://github.com/google-research/fast-soft-sort/}.

\subsection{Top-k classification loss function}

\paragraph{Experimental setup.}

To demonstrate the effectiveness of our proposed soft rank operators as a
drop-in replacement for exisiting ones, we reproduce the top-$k$
classification experiment of \citet{cuturi_2019}. The authors propose a loss for
top-$k$ classification between a ground truth class $y \in [n]$ and a vector of
soft ranks $\r \in \RR^n$, which is higher if the predicted soft ranks correctly
place $y$ in the top-$k$ elements. We compare the following soft operators
\begin{itemize}[topsep=0pt,itemsep=2pt,parsep=2pt,leftmargin=10pt]
    \item OT \citep{cuturi_2019}: optimal transport formulation.
    \item All-pairs \citep{qin_2010}: noting that $[r(\btheta)]_i$ is
        equivalent to $\sum_{j \neq i} \mathbf{1}[\theta_i < \theta_j] + 1$, one can
        obtain soft ranks in $O(n^2)$ by replacing the indicator function with a
        sigmoid.
    \item Proposed: our $O(n \log n)$ soft ranks $r_Q$ and $r_E$.
        Although not used in this experiment, for top-$k$ ranking, the
        complexity can be reduced to $O(n \log k)$ by computing $P_\Psi$ using
        the algorithm of \citet{projection_permutahedron}.
\end{itemize}

We use the CIFAR-10 and CIFAR-100 datasets, with $n=10$ and $n=100$ classes,
respectively.
Following \citet{cuturi_2019}, we use a vanilla CNN (4 Conv2D with 2
max-pooling layers, ReLU activation, 2 fully connected layers with batch norm on
each),
the ADAM optimizer \citep{kingma_2014} with a constant step size of
$10^{-4}$, and set $k=1$. Similarly to \citet{cuturi_2019}, we found that
squashing the scores $\btheta$ to $[0,1]^n$ with a logistic map was beneficial.

\vspace{-0.3cm}
\paragraph{Results.}

Our empirical results, averaged over $12$ runs, are shown in Figure
\ref{fig:cifar} (left, center).  On both CIFAR-10 and CIFAR-100, our soft rank
formulations achieve comparable accuracy to the OT formulation, though
significantly faster, as we elaborate below. Similarly to 
\citet{cuturi_2019}, we found that the soft top-$k$ loss slightly
outperforms the classical cross-entropy (logistic) loss for these two datasets.
However, we did not find that the All-pairs formulation could outperform the
cross-entropy loss. 

The training times for 600 epochs on CIFAR-100 were
29 hours (OT), 21 hours ($r_Q$), 23 hours ($r_E$) and
16 hours (All-pairs). Training times on CIFAR-10 were similar.
While our soft operators are several hours faster than OT, 
they are slower than All-pairs, despite its $O(n^2)$ complexity. 
This is due the fact that, with $n=100$, All-pairs is very efficient on GPUs,
while our PAV implementation runs on CPU. 

\subsection{Runtime comparison: effect of input dimension}

To measure the impact of the dimensionality $n$ on the runtime of each
method, we designed the following experiment.

\vspace{-0.3cm}
\paragraph{Experimental setup.}

We generate score vectors $\btheta \in \RR^n$ randomly according to
$\mathcal{N}(0, 1)$,  for $n$ ranging from $100$ up to
$5000$.  For fair comparison with GPU implementations (OT, All-pairs,
Cross-entropy), we create a batch of $128$ such vectors and we compare the time
to compute soft ranking operators on this batch. We run this experiment on top
of TensorFlow \citep{tensorflow} on a six core Intel Xeon W-2135 with 64 GBs of
RAM and a GeForce GTX 1080 Ti.

\vspace{-0.3cm}
\paragraph{Results.}

Run times for one batch computation with backpropagation disabled are shown in
Figure \ref{fig:cifar} (Right). While their runtime is reasonable in small
dimension, OT and All-pairs scale quadratically with respect to the
dimensionality $n$ (note the log scale on the $y$-axis).
Although slower than a softmax, our formulations scale well,
with the dimensionality $n$ having negligible impact on the runtime.  OT and
All-pairs go out-of-memory starting from $n=2000$ and $n=3000$, respectively.
With backpropagation enabled, they go out-of-memory at $n=1000$ and $n=2500$,
due to the need for recording the computational graph. This shows that the lack
of memory available on GPUs is problematic for these methods.  In contrast, our
approaches only require $O(n)$ memory and comes with the theoretical Jacobian
(they do not rely on differentiating through iterates). They therefore suffer
from no such issues.

\subsection{Label ranking via soft Spearman's rank correlation coefficient}

We now consider the label ranking setting where supervision is given as full
rankings (e.g., $2 \succ 1 \succ 3 \succ 4$) rather than as label relevance
scores. The goal is therefore to learn to predict permutations, i.e.,
a function $f_\w \colon \cX \to \Sigma$. A classical metric between ranks is 
Spearman's rank correlation coefficient, defined as the Pearson correlation
coefficient between the ranks. Maximizing this coefficient
is equivalent to minimizing the squared loss between ranks.
A naive idea would be therefore to use as loss
$\frac{1}{2} \|\r - r(\btheta)\|^2$, where $\btheta = g_\w(\x)$.
This is unfortunately a discontinuous function of $\btheta$.
We therefore propose to rather use
$\frac{1}{2} \|\r - r_\Psi(\btheta)\|^2$, hence the name differentiable
Spearman's rank correlation coefficient. At test time, we replace $r_\Psi$ with
$r$, which is justified by the order-preservation property (Proposition
\ref{prop:properties}).

\vspace{-0.3cm}
\paragraph{Experimental setup.}

We consider the $21$ datasets from \citep{hullermeier2008label,cheng09icml},
which has both semi-synthetic data obtained from classification problems, and
real biological measurements.
Following \citep{korba2018structured}, we average over two 10-fold validation
runs, in each of which we train on 90\% and evaluate on 10\% of the data.
Within each repetition, we run an internal 5-fold cross-validation to
grid-search for the best parameters.
We consider linear models of the form $g_{W,\b}(\x) = W \x + \b$,
and for ablation study we drop the soft ranking layer $r_\Psi$.

\vspace{-0.3cm}
\paragraph{Results.}

Due to the large number of datasets, we choose to present a summary of the
results in \Cref{fig:label_ranking_spearman}. 
We postpone detailed results to the appendix (\Cref{table:label_ranking_spearman}).
Out of 21 datasets, introducing a soft rank layer with $\Psi=Q$ works better on 15
datasets, similarly on 4 and worse on 2 datasets.
We can thus conclude that even for such simple model, introducing our layer
is beneficial, and even achieving state of the art results on some of the
datasets (full details in the appendix).

\begin{figure}[t]
    \centering
    \includegraphics[width=0.31\textwidth]{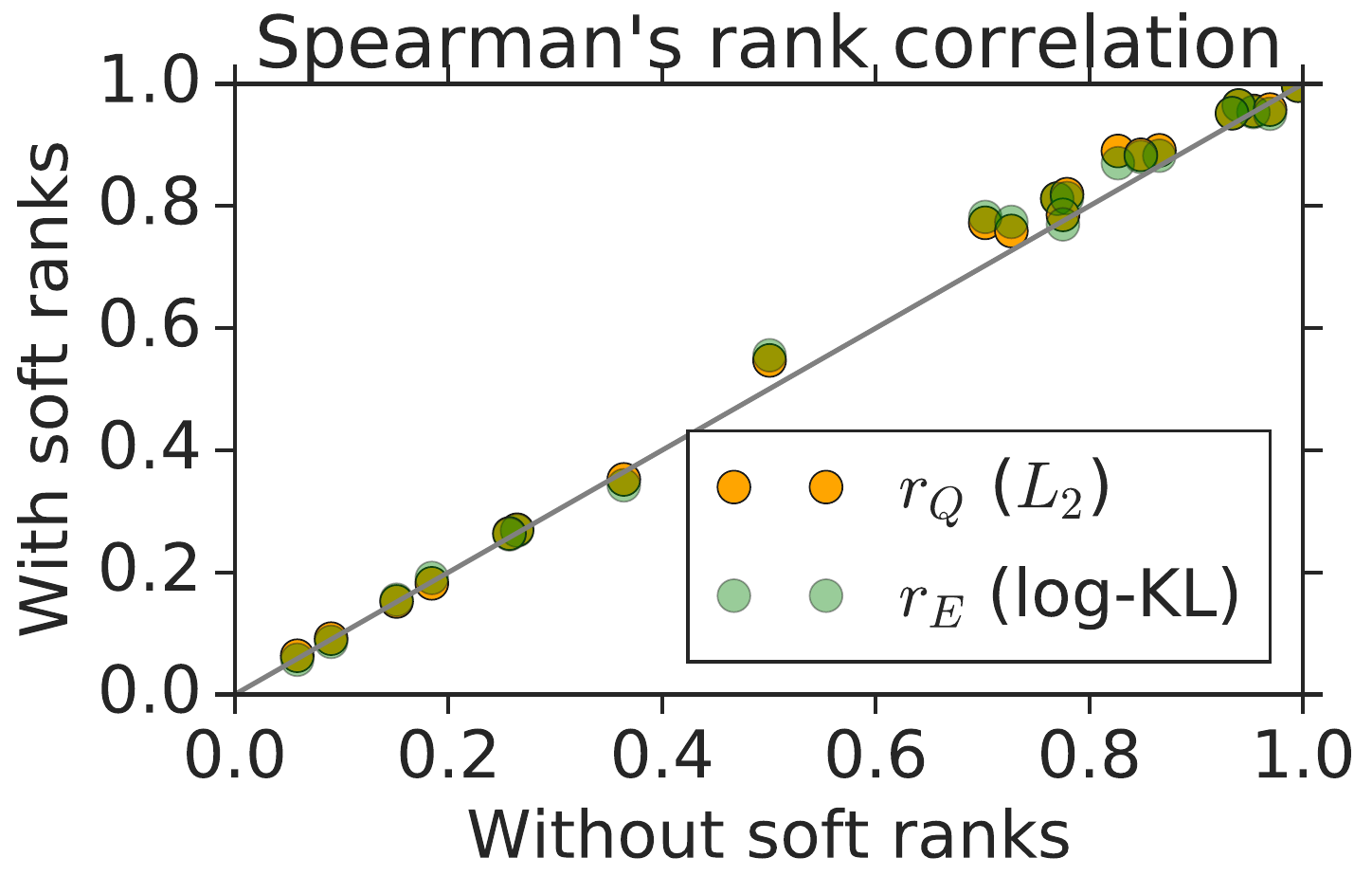} 
    \caption{{\bf Label ranking accuracy} with and without soft rank layer. Each
        point above the line represents a dataset where our soft rank layer
    improves Spearman's rank correlation coefficient.}
\label{fig:label_ranking_spearman}
\end{figure}

\begin{figure*}[t]
    \centering
	\begin{minipage}{.28\textwidth}
    \vspace{.28cm}
    \includegraphics[width=0.96\textwidth]{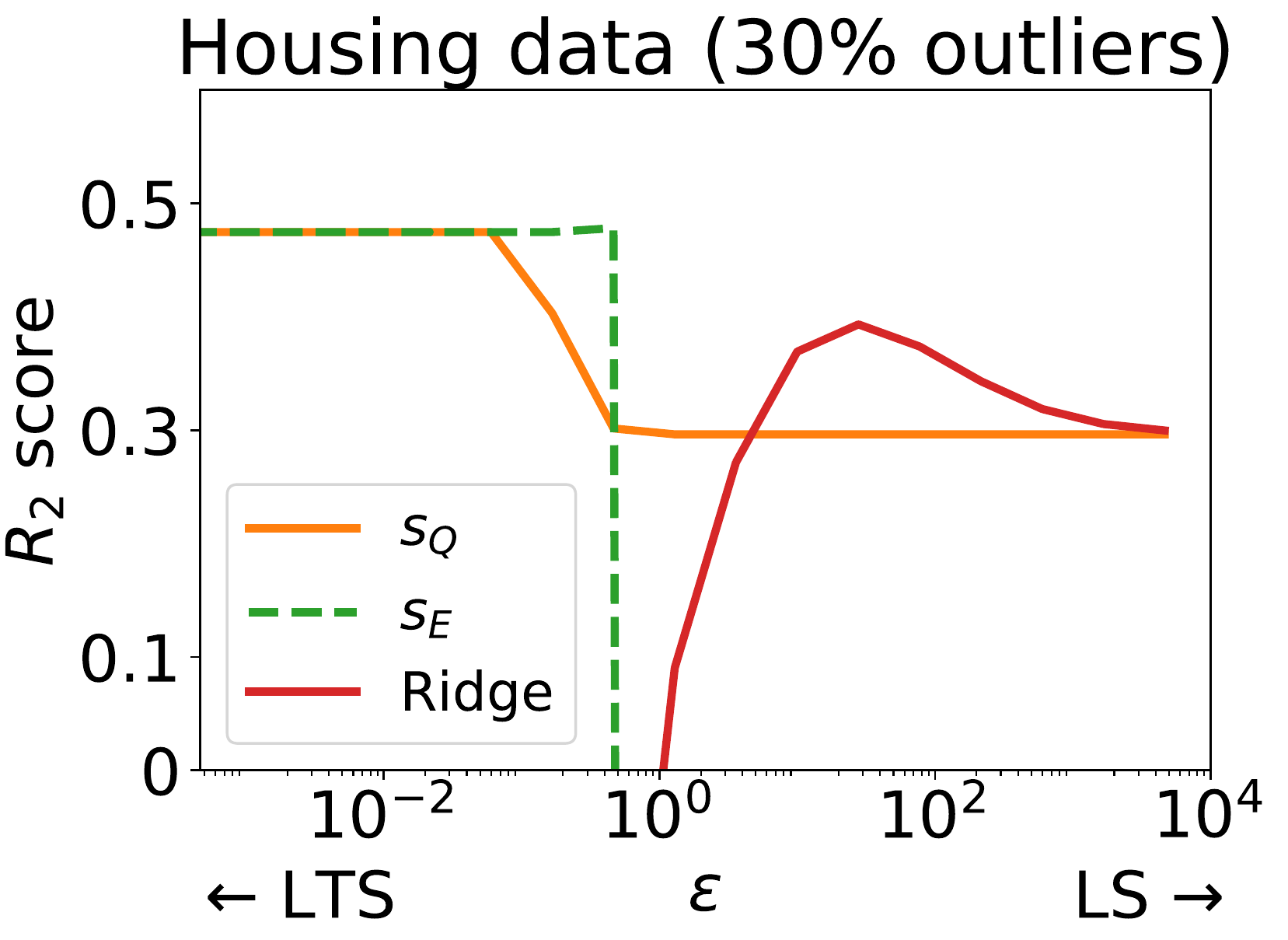} 
		\captionof{figure}{Empirical validation of interpolation between LTS and LS.}
		\label{fig:ls-interpolation}
	\end{minipage}
	\hspace*{\fill}
	\begin{minipage}{.7\textwidth}
        \includegraphics[width=0.96\textwidth]{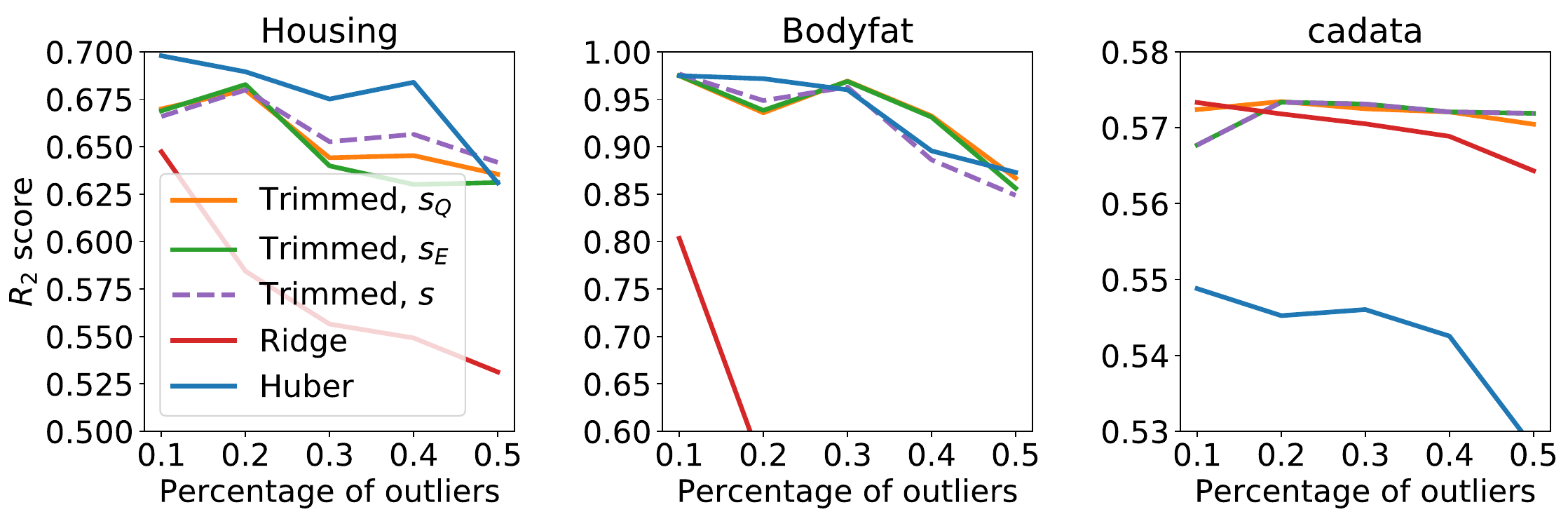} 
        \captionof{figure}{$R_2$ score (higher is better) averaged over
    $10$ train-time splits for increasing percentage of outliers.
Hyper-parameters are tuned by $5$-fold cross-validation.}
		\label{fig:trimmed_lineplot}
	\end{minipage}
\end{figure*}

\subsection{Robust regression via soft least trimmed squares}
\label{sec:trimmed_regression}

We explore in this section the application of our soft sorting operator
$s_{\varepsilon \Psi}$ to robust regression. 
Let $\x_1, \dots, \x_n \in \cX \subseteq \RR^d$ and $y_1,\dots,y_n \in \cY
\subseteq \RR$ be a training set of input-output pairs.
Our goal is to learn a model $g_\w \colon \RR^d \to \RR$ that predicts outputs
from inputs, where $\w$ are model parameters. 
We focus on $g_\w(\x) \coloneqq \langle \w, \x \rangle$ for simplicity.
We further assume that a
certain proportion of examples are outliers including some label noise, which
makes the task of robustly estimating $g_\w$ particularly challenging.  

The classical ridge regression can be cast as
\begin{equation}
\min_\w \frac{1}{n} \sum_{i=1}^n \ell_i(\w) + \frac{1}{2\varepsilon} \|\w\|^2,
\label{eq:ridge_regression}
\end{equation}
where $\ell_i(\w) \coloneqq \frac{1}{2} (y_i - g_\w(\x_i))^2$. 
In order to be robust to label noise,
we propose instead to sort the losses (from larger to smaller) and to ignore the
first $k$ ones. Introducing our soft sorting operator, this can be formulated as
\begin{equation}
\min_\w \frac{1}{n - k} \sum_{i=k+1}^n \ell_i^\varepsilon(\w)
\label{eq:soft_trimmed}
\end{equation}
where $\ell_i^\varepsilon(\w) \coloneqq [s_{\varepsilon \Psi}(\bm{\ell}(\w))]_i$
is the $i^{\text{th}}$ loss in the soft sort,
and  $\bm{\ell}(\w) \in \RR^n$ is the loss vector that gathers
$\ell_i(\w)$ for each $i \in [n]$. 
Solving \eqref{eq:soft_trimmed} with existing $O(n^2)$ soft sorting operators
could be particularly computationally prohibitive, since here $n$ is the number
of training samples.

When $\varepsilon \to 0$, 
we have
$s_{\varepsilon \Psi}(\bm{\ell}(\w)) \to s(\bm{\ell}(\w))$
and \eqref{eq:soft_trimmed} is known as least trimmed squares
(LTS) \citep{rousseeuw_1984,rousseeuw_2005}.
When $\varepsilon \to \infty$,
we have, from Proposition \ref{prop:properties},
$s_{\varepsilon \Psi}(\bm{\ell}(\w)) \to \text{mean}(\bm{\ell}(\w)) \ones$
and therefore both \eqref{eq:ridge_regression} and \eqref{eq:soft_trimmed}
converge to the least squares (LS) objective,
$\min_\w \frac{1}{n} \sum_{i=1}^n \ell_i(\w)$.
To summarize, our proposed objective \eqref{eq:soft_trimmed}, dubbed
soft least trimmed squares, \textbf{interpolates} between least trimmed squares
($\epsilon \to 0$) and least squares ($\epsilon \to \infty$), 
as also confirmed empirically in \Cref{fig:ls-interpolation}.

\vspace{-0.2cm}
\paragraph{Experimental setup.}

To empirically validate our proposal, we compare cross-validated results for
increasing percentage of outliers of the following methods:
\begin{itemize}[topsep=0pt,itemsep=2pt,parsep=2pt,leftmargin=10pt]
    \item Least trimmed squares, with truncation parameter $k$,
    \item Soft least trimmed squares \eqref{eq:soft_trimmed}, 
        with truncation parameter $k$ and regularization parameter
        $\varepsilon$,
    \item Ridge regression \eqref{eq:ridge_regression}, with regularization
        parameter $\varepsilon$,
    \item Huber loss \citep{huber_1964} with regularization parameter
        $\varepsilon$ and threshold parameter $\tau$, 
        as implemented in scikit-learn \citep{sklearn}.
\end{itemize}
We consider datasets from the LIBSVM archive
\citep{libsvm_datasets}.  We hold out 20\% of the data as test set and use the
rest as training set. We artifically create outliers, by adding noise to a
certain percentage of the training labels, using
$y_i \leftarrow y_i + e$, where $e \sim \mathcal{N}(0, 5\times \text{std}(\y))$.  
We do not add noise to the test set. 
For all methods, we use L-BFGS \citep{lbfgs}, with a maximum of $300$
iterations.
For hyper-parameter optimization, we use $5$-fold cross-validation.  We choose
$k$ from $\{\lceil 0.1 n\rceil, \lceil 0.2 n\rceil, \dots, \lceil 0.5
n\rceil\}$, $\varepsilon$ from $10$ log-spaced values between $10^{-3}$ and
$10^4$, and $\tau$ from $5$ linearly spaced values between $1.3$ and $2$. We
repeat this procedure $10$ times with a different train-test split, and report
the averaged $R_2$ scores (a.k.a.\ coefficient of determination).

\vspace{-0.2cm}
\paragraph{Results.}

The averaged $R_2$ scores (higher is better)
are shown in Figure \ref{fig:trimmed_lineplot}.
On all datasets, the accuracy of ridge regression deteriorated
significantly with increasing number of outliers. Least trimmed squares (hard or
soft) performed slightly worse than the Huber loss on housing, 
comparably on bodyfat and much better on cadata. We found that hard least trimmed
squares (i.e., $\varepsilon = 0$) worked well on all datasets, showing that
regularization is less important for sorting operators (which are piecewise
linear) than for ranking operators (which are piecewise constant).
Nevertheless, regularization appeared useful in some cases.
For instance, on cadata, the cross-validation procedure
picked $\varepsilon > 1000$ when the percentage of outliers is less 
than $20 \%$, and $\varepsilon < 10^{-3}$ when the percentage of outliers is
larger than $20\%$. 
This is confirmed visually
on Figure \ref{fig:trimmed_lineplot}, where the soft sort with $\Psi=Q$ works
slightly better than the hard sort with few outliers, then performs comparably
with more outliers.
The interpolation effect enabled by $\varepsilon$ therefore allows some 
adaptivity to the (unknown) percentage of outliers. 

\section{Conclusion}

Building upon projections onto permutahedra, we constructed differentiable
sorting and ranking operators.  We derived \textbf{exact} $O(n \log n)$
computation and $O(n)$ differentiation of these operators, a key technical
contribution of this paper.  We demonstrated that our operators can be used
as a drop-in replacement for existing $O(n^2)$ ones, with an order-of-magnitude
speed-up. We also showcased two applications enabled by our soft operators:
label ranking with differentiable Spearman's rank correlation coefficient and
robust regression via soft least trimmed squares.

\clearpage

\section*{Acknowledgements}

We are grateful to Marco Cuturi and Jean-Philippe Vert for useful discussions,
and to Carlos Riquelme for comments on a draft of this paper.



\appendix
\onecolumn

\begin{center}
\Huge{Appendix} 
\end{center}

\section{Additional empirical results}

We include in this section the detailed label ranking results on the same $21$
datasets as considered by \citet{hullermeier2008label} as well
as \citet{cheng09icml}.

For entropic regularization $E$, in addition to $r_E$, we also consider an
alternative formulation.
Since $\bre$ is already strictly positive, instead of using the log-projection
onto $\cP(e^{\bre})$, we can directly use the projection onto $\cP(\bre)$. In
our notation, this can be written as
$\tilde{r}_{\varepsilon E}(\btheta) = \tilde{r}_E(\nicefrac{\btheta}{\varepsilon})$, where
\begin{equation}
    \tilde{r}_E(\btheta)
\coloneqq \argmin_{\bmu \in \cP(\bre)}
\textnormal{KL}(\bmu, e^{-\btheta})
= e^{P_E(-\btheta, \log \bre)}.
\end{equation}
Spearman's rank correlation coefficient for each method, averaged over $5$ runs,
is shown in the table below.

\begin{table}[H]
\centering
\begin{tabular}{lllll}
\toprule
{} &                          $r_Q$ ($L_2$) &                         $r_E$ (log-KL) &                     $\tilde{r}_E$ (KL) &                          No projection \\
Dataset                &                                        &                                        &                                        &                                        \\
\midrule
fried                  &  1.00 $\pm$ 0.00                       &  1.00 $\pm$ 0.00                       &  1.00 $\pm$ 0.00                       &  1.00 $\pm$ 0.00                       \\
wine                   &  0.96 $\pm$ 0.03 ({\color{red} -0.01}) &  0.95 $\pm$ 0.04 ({\color{red} -0.02}) &  0.96 $\pm$ 0.03 ({\color{red} -0.01}) &  0.97 $\pm$ 0.02                       \\
authorship &  0.96 $\pm$ 0.01 &  0.95 $\pm$ 0.01                       &  0.95 $\pm$ 0.01                       &  0.95 $\pm$ 0.01                       \\
pendigits              &  0.96 $\pm$ 0.00 ({\color{teal}+0.02}) &  0.96 $\pm$ 0.00 ({\color{teal}+0.02}) &  0.96 $\pm$ 0.00 ({\color{teal}+0.02}) &  0.94 $\pm$ 0.00                       \\
segment                &  0.95 $\pm$ 0.01 ({\color{teal}+0.02}) &  0.95 $\pm$ 0.00 ({\color{teal}+0.02}) &  0.95 $\pm$ 0.01 ({\color{teal}+0.02}) &  0.93 $\pm$ 0.01                       \\
glass                  &  0.89 $\pm$ 0.04 ({\color{teal}+0.03}) &  0.88 $\pm$ 0.05 ({\color{teal}+0.02}) &  0.89 $\pm$ 0.04 ({\color{teal}+0.03}) &  0.87 $\pm$ 0.05                       \\
vehicle                &  0.88 $\pm$ 0.02 ({\color{teal}+0.04}) &  0.88 $\pm$ 0.02 ({\color{teal}+0.03}) &  0.89 $\pm$ 0.02 ({\color{teal}+0.04}) &  0.85 $\pm$ 0.03                       \\
iris                   &  0.89 $\pm$ 0.07 ({\color{teal}+0.06}) &  0.87 $\pm$ 0.07 ({\color{teal}+0.04}) &  0.87 $\pm$ 0.07 ({\color{teal}+0.05}) &  0.83 $\pm$ 0.09                       \\
stock                  &  0.82 $\pm$ 0.02 ({\color{teal}+0.04}) &  0.81 $\pm$ 0.02 ({\color{teal}+0.03}) &  0.83 $\pm$ 0.02 ({\color{teal}+0.05}) &  0.78 $\pm$ 0.02                       \\
wisconsin              &  0.79 $\pm$ 0.03 ({\color{teal}+0.01}) &  0.77 $\pm$ 0.03 ({\color{red} -0.01}) &  0.79 $\pm$ 0.03 ({\color{teal}+0.01}) &  0.78 $\pm$ 0.03                       \\
elevators              &  0.81 $\pm$ 0.00 ({\color{teal}+0.04}) &  0.81 $\pm$ 0.00 ({\color{teal}+0.04}) &  0.81 $\pm$ 0.00 ({\color{teal}+0.04}) &  0.77 $\pm$ 0.00                       \\
vowel                  &  0.76 $\pm$ 0.03 ({\color{teal}+0.03}) &  0.77 $\pm$ 0.01 ({\color{teal}+0.05}) &  0.78 $\pm$ 0.02 ({\color{teal}+0.05}) &  0.73 $\pm$ 0.02                       \\
housing                &  0.77 $\pm$ 0.03 ({\color{teal}+0.07}) &  0.78 $\pm$ 0.02 ({\color{teal}+0.08}) &  0.77 $\pm$ 0.03 ({\color{teal}+0.07}) &  0.70 $\pm$ 0.03                       \\
cpu-small              &  0.55 $\pm$ 0.01 ({\color{teal}+0.05}) &  0.56 $\pm$ 0.01 ({\color{teal}+0.05}) &  0.54 $\pm$ 0.01 ({\color{teal}+0.04}) &  0.50 $\pm$ 0.02                       \\
bodyfat                &  0.35 $\pm$ 0.07 ({\color{red} -0.01}) &  0.34 $\pm$ 0.07 ({\color{red} -0.02}) &  0.34 $\pm$ 0.08 ({\color{red} -0.02}) &  0.36 $\pm$ 0.07                       \\
calhousing             &  0.27 $\pm$ 0.01 ({\color{teal}+0.01}) &  0.27 $\pm$ 0.01 &  0.27 $\pm$ 0.01 ({\color{teal}+0.01}) &  0.26 $\pm$ 0.01                       \\
diau                   &  0.26 $\pm$ 0.02                       &  0.26 $\pm$ 0.02                       &  0.26 $\pm$ 0.02                       &  0.26 $\pm$ 0.02                       \\
spo                    &  0.18 $\pm$ 0.02                       &  0.19 $\pm$ 0.02 ({\color{teal}+0.01}) &  0.18 $\pm$ 0.02                       &  0.18 $\pm$ 0.02                       \\
dtt                    &  0.15 $\pm$ 0.04                       &  0.16 $\pm$ 0.04 &  0.14 $\pm$ 0.04 ({\color{red} -0.01}) &  0.15 $\pm$ 0.04                       \\
cold                   &  0.09 $\pm$ 0.03                       &  0.09 $\pm$ 0.03                       &  0.10 $\pm$ 0.03 ({\color{teal}+0.01}) &  0.09 $\pm$ 0.04                       \\
heat                   &  0.06 $\pm$ 0.02                       &  0.06 $\pm$ 0.02                       &  0.06 $\pm$ 0.02                       &  0.06 $\pm$ 0.02                       \\
\bottomrule
\end{tabular}
\caption{Detailed results of our label ranking experiment. Blue color indicates
better Spearman rank correlation coefficient compared to using no projection. 
Red color indicates worse coeffcient.}
\label{table:label_ranking_spearman}
\end{table}

\clearpage
\section{Proofs}

\subsection{Proof of Lemma \ref{lemma:discrete} (Discrete optimization
formulation)}
\label{appendix:proof_lemma_discrete}

For the first claim, we have for all $\w \in \RR^n$ such that 
$w_1 > w_2 > \dots > w_n$
\begin{equation}
\sigma(\btheta) = \argmax_{\sigma \in \Sigma} \langle \btheta_\sigma, \w \rangle
\label{eq:sigma_discrete_all_w}
\end{equation}
and in particular for $\w = \bre$.
The second claim follows from
\begin{equation}
\sigma(\btheta) 
= \argmax_{\sigma \in \Sigma} \langle \btheta, \w_\sigmainv \rangle
= \argmax_{\pi^{-1} \in \Sigma} \langle \btheta, \w_\pi \rangle
= \left(\argmax_{\pi \in \Sigma} \langle \btheta, \w_\pi \rangle\right)^{-1}.
\end{equation}

\subsection{Proof of Proposition \ref{prop:lp} (Linear programming formulations)}
\label{appendix:proof_prop_lp}

Let us prove the first claim.
The key idea is to absorb $\btheta_\sigma$ in the permutahedron.
Using \eqref{eq:sigma_discrete_all_w}, we obtain for all $\btheta \in \RR^n$ and
for all $\w \in \RR^n$ such that $w_1 > \dots > w_n$
\begin{equation}
\btheta_{\sigma(\btheta)} 
= \argmax_{\btheta_\sigma \colon \sigma \in \Sigma} \langle \btheta_\sigma, \w
\rangle
= \argmax_{\y \in \Sigma(\btheta)} \langle \y, \w \rangle
= \argmax_{\y \in \cP(\btheta)} \langle \y, \w \rangle,
\end{equation}
where in the second equality we used $\cP(\btheta) = \conv(\Sigma(\btheta))$ and
the fundamental theorem of linear programming \citep[Theorem 6]{dantzig}.
For the second claim, we have similarly
\begin{equation}
\w_{r(\btheta)} 
= \argmax_{\w_\pi \colon \pi \in \Sigma} \langle \btheta, \w_\pi \rangle
= \argmax_{\y \in \cP(\w)} \langle \btheta, \y \rangle.
\end{equation}
Setting $\w = \bre$ and using 
$\bre_{r(\btheta)} = \bre_{\sigmainv(\btheta)} = \sigmainv(-\btheta) =
r(-\btheta)$ proves the claim.

\subsection{Proof of Proposition \ref{prop:properties} (Properties of soft
sorting and ranking operators)}
\label{appendix:prop_soft_sort_rank}

\paragraph{Differentiability.}

Let $\cC$ be a closed convex set and let 
$\bmu^\star(\z) \coloneqq \argmax_{\bmu \in \cC} \langle \bmu, \z \rangle -
\Psi(\z)$.  If $\Psi$ is strongly convex over $\cC$, then $\bmu^\star(\z)$ is
Lipschitz continuous. By Rademacher’s theorem, $\bmu^\star(\z)$ is
differentiable almost everywhere.
Furthermore, since $P_\Psi(\z, \w) = \nabla \Psi(\bmu^\star(\z))$ with $\cC
=\cP(\nabla \Psi^{-1}(\w))$,
$P_\Psi(\z, \w)$ is differentiable a.e.\ as long as $\Psi$ is
twice differentiable, which is the case when $\Psi \in \{Q,E\}$.

\paragraph{Order preservation.}

Proposition 1 of \citet{fy_losses_journal} shows that
$\bmu^\star(\z)$ and $\z$ are sorted the same way. 
Furthermore, since $P_\Psi(\z, \w) = \nabla \Psi(\bmu^\star(\z))$ with $\cC
=\cP(\nabla \Psi^{-1}(\w))$ and since $\nabla \Psi$ is monotone,
$P_\Psi(\z, \w)$ is sorted the same way as $\z$, as well.
Let $\s = s_{\varepsilon \Psi}(\btheta)$ and
$\r = r_{\varepsilon \Psi}(\btheta)$.
From the respective definitions, this means that $\s$ is sorted the same way as
$\bre$ (i.e., it is sorted in descending order) and $\r$ is sorted the same way
as $-\btheta$, which concludes the proof.

\paragraph{Asymptotic behavior.}

We will now characterize the behavior for sufficiently small and large
regularization strength $\varepsilon$. Note that rather than multiplying the
regularizer $\Psi$ by $\varepsilon > 0$, we instead divide $\s$ by
$\varepsilon$, which is equivalent.
\begin{lemma}{Analytical solutions of isotonic optimization in the limit regimes}

If $\varepsilon \le \epsmin(\s, \w)\coloneqq \min_{i\in[n-1]}
\frac{s_i-s_{i+1}}{w_i-w_{i+1}}$, then
\begin{equation}
\v_Q(\s/\varepsilon, \w)=\v_E(\s/\varepsilon, \w)=\s/\varepsilon - \w.
\end{equation}
If $\varepsilon > \epsmax(\s, \w)\coloneqq \max_{i<j}
\frac{s_i-s_{j}}{w_i-w_{j}}$, then
\begin{equation}
v_Q(\s/\varepsilon, \w)= \frac{1}{n} \sum_{i=1}^n(s_i/\varepsilon - w_i) \ones
\quad \text{and} \quad
v_E(\s/\varepsilon, \w)=(\text{LSE}(\s / \varepsilon) - \text{LSE}(\w))\ones,
\end{equation}
where $\text{LSE}(\x) \coloneqq \log \sum_i e^{x_i}$.
\end{lemma}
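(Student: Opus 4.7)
The plan is to reduce both $\v_Q$ and $\v_E$ to separable isotonic minimizations $\min_{v_1 \ge \cdots \ge v_n} \sum_i f_i(v_i)$ and, in each regime, guess the combinatorial structure of the optimum (no pooling for small $\varepsilon$, full pooling for large $\varepsilon$), then verify optimality via the KKT conditions associated with the chain constraints $v_i \ge v_{i+1}$.

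For the small-$\varepsilon$ regime I first compute the unconstrained per-coordinate minimizer. For $\v_Q$ we have $f_i(v_i) = \tfrac{1}{2}(v_i - (s_i/\varepsilon - w_i))^2$, minimized at $v_i^\star = s_i/\varepsilon - w_i$. For $\v_E$, setting $f_i'(v_i) = -e^{s_i/\varepsilon - v_i} + e^{w_i} = 0$ yields the same $v_i^\star$. Feasibility of $\v^\star$ reduces to the pointwise requirement $(s_i - s_{i+1})/\varepsilon \ge w_i - w_{i+1}$, and since $\w$ is strictly decreasing this becomes $\varepsilon \le (s_i - s_{i+1})/(w_i - w_{i+1})$ for each $i$, i.e., $\varepsilon \le \epsmin$. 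When this holds, the unconstrained optimum is feasible and hence constitutes the constrained optimum in both cases.

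For the large-$\varepsilon$ regime I guess $\v = c\ones$ and introduce KKT multipliers $\lambda_0 = \lambda_n = 0$ and $\lambda_i \ge 0$ for $i = 1,\dots,n-1$. Stationarity $f_i'(c) = \lambda_i - \lambda_{i-1}$ gives $\lambda_k = \sum_{i=1}^k f_i'(c)$, and imposing $\lambda_n = 0$ fixes $c$: in the quadratic case, $\sum_i (c - (s_i/\varepsilon - w_i)) = 0$ yields $c = \text{mean}(\s/\varepsilon - \w)$, while in the entropic case, $\sum_i e^{w_i} = e^{-c}\sum_i e^{s_i/\varepsilon}$ yields $c = \text{LSE}(\s/\varepsilon) - \text{LSE}(\w)$. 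The remaining requirement $\lambda_k \ge 0$ for $1 \le k \le n-1$ becomes, for $\v_Q$, the assertion that every prefix average of $\s/\varepsilon - \w$ lies below the total mean, and for $\v_E$, the cross-cut inequality $\sum_{i \le k < j} e^{w_i + s_j/\varepsilon} \ge \sum_{i \le k < j} e^{w_j + s_i/\varepsilon}$. Both are implied term-by-term by the pairwise bound $\varepsilon(w_i - w_j) \ge s_i - s_j$ for every $i < j$ (where $w_i > w_j$), which is exactly $\varepsilon \ge (s_i - s_j)/(w_i - w_j)$; the supremum of these ratios is $\epsmax$, so $\varepsilon > \epsmax$ suffices.

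The small-$\varepsilon$ half and the quadratic large-$\varepsilon$ half amount to direct arithmetic manipulations. The main obstacle is the entropic large-$\varepsilon$ case: because $f_i'(c)$ involves exponentials, the multipliers $\lambda_k$ are not plain arithmetic prefix sums but differences of log-sum-exp-weighted quantities, and one has to rewrite $\lambda_k \ge 0$ as the cross-cut pairwise inequality above before the $\epsmax$ threshold can be invoked term by term.
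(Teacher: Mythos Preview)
Your proof is correct. The small-$\varepsilon$ half is identical to the paper's: compute the unconstrained per-coordinate minimizer and check that $\varepsilon \le \epsmin$ makes it feasible.

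For the large-$\varepsilon$ half you take a genuinely different route. The paper argues algorithmically: it runs the PAV merging process, showing that for every $k$ the block $\{1,\dots,k\}$ and the singleton $\{k+1\}$ violate the monotonicity condition $\gamma_\Psi(\{1,\dots,k\}) \ge \gamma_\Psi(\{k+1\})$, so PAV necessarily collapses everything into the single block $[n]$; the closed forms then follow from $\gamma_Q([n])$ and $\gamma_E([n])$. You instead guess the constant solution and verify KKT directly, reducing dual feasibility $\lambda_k \ge 0$ to the cross-cut identity $\lambda_k = \tfrac{1}{n}\sum_{i\le k<j}\big((w_i-w_j)-(s_i-s_j)/\varepsilon\big)$ in the quadratic case and $\lambda_k = S^{-1}\sum_{i\le k<j}\big(e^{w_i+s_j/\varepsilon}-e^{w_j+s_i/\varepsilon}\big)$ in the entropic case, each term of which is nonnegative exactly when $\varepsilon(w_i-w_j)\ge s_i-s_j$. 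This buys you a self-contained argument that does not appeal to PAV's correctness or to any particular merge order, and it makes transparent why the threshold involves \emph{all} pairs $i<j$ rather than only consecutive ones. The paper's PAV argument, on the other hand, is closer to the algorithm actually used to compute the operators and motivates the block structure that is later exploited for the Jacobian. Both approaches ultimately reduce to the same pairwise inequality $\varepsilon(w_i-w_j)>s_i-s_j$; they just arrive there from opposite ends.
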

\begin{proof}
We start with the $\varepsilon \le \epsmin(\s, \w)$ case.
Recall that $\s$ is sorted in descending order.
Therefore, since we chose $\varepsilon$ sufficiently small, 
the vector $\v=\s/\varepsilon-\w$ is sorted in descending order
as well. This means that $\v$ is feasible, i.e., it
belongs to the constraint sets in \Cref{prop:projection}.
Further, note that $v_i=\gamma_Q(\{i\}; \s / \varepsilon, \w)=\gamma_E(\{i\}; \s
/ \varepsilon, \w) = s_i / \varepsilon - w_i$ so that $\v$ is the optimal
solution if we drop the constraints, which completes the argument.

Next, we tackle the $\varepsilon > \epsmax(\s, \w)$ case.
Note that the claimed solutions are exactly $\gamma_Q([n]; \s, \w)$
and $\gamma_E([n]; \s, \w)$, so the claim will immediately follow if
we show that $[n]$ is an optimal partition.
The PAV algorithm (cf.\ \S\ref{appendix:pav}) merges at each iteration any two
neighboring blocks $B_1,B_2$ that violate $\gamma_\Psi(B_1 ; \s / \varepsilon,
\w) \geq \gamma_\Psi(B_2; \s / \varepsilon, \w)$, starting
from the partitions consisting of singleton sets.
Let $k \in \{1, \dots, n-1\}$ be the iteration number.
We claim that the two blocks, $B_1=\{1, 2, \ldots, k\}$ and $B_2=\{k+1\}$,
will always be violating the constraint, so that they can be merged.
Note that in the quadratic case, they can be merged only if
\begin{equation}
\sum_{i=1}^k (s_i/\varepsilon - w_i)/k <  s_{k+1}/\varepsilon - w_{k+1},
\end{equation}
which is equivalent to
\begin{equation}
\sum_{i=1}^k \frac{s_i - s_{k+1}}{k\varepsilon} < \sum_{i=1}^k(w_i-w_{k+1}),
\end{equation}
which is indeed satisfied when 
$\varepsilon > \epsmax(\s, \w)$.
In the KL case, they can be merged only if
\begin{align}
	\log \sum_{i=1}^k e^{s_i/\varepsilon} - \log \sum_{i=1}^k e^{w_i} <
	s_{k+1}/\varepsilon - w_{k+1}
	&\iff \log \sum_{i=1}^k e^{s_i/\varepsilon} - s_{k+1}/\varepsilon <
	\log \sum_{i=1}^k e^{w_i} - w_{k+1} \\
	&\iff \log \sum_{i=1}^k e^{s_i/\varepsilon} - \log e^{s_{k+1}/\varepsilon} <
	\log \sum_{i=1}^k e^{w_i} - \log e^{w_{k+1}} \\
	&\iff \log \sum_{i=1}^k e^{(s_i-s_{k+1})/\varepsilon} <
	\log \sum_{i=1}^k e^{w_i - w_{k+1}} \\
	&\iff \sum_{i=1}^k e^{(s_i-s_{k+1})/\varepsilon} <
	\sum_{i=1}^k e^{w_i-w_{k+1}}.
\end{align}
This will be true if the $i^\text{th}$ term on the left-hand side is smaller
than the $i^\text{th}$ term on the right-hand side, i.e., when
$(s_i-s_{k+1})/\varepsilon < w_i-w_{k+1}$, which again is implied by the
assumption.
\end{proof}

We can now directly characterize the behavior of the projection operator
$P_\Psi$ in the two regimes $\varepsilon \le \epsmin(s(\z), \w)$ and $\varepsilon
> \epsmax(s(\z), \w)$. This in turn implies the results for both the soft ranking and
sorting operations using \eqref{eq:soft_sort} and \eqref{eq:soft_rank}.
\begin{proposition}{Analytical solutions of the projections in the limit regimes}

If $\varepsilon \le \epsmin(s(\z), \w)$, then
\begin{equation}
P_\Psi(\z/\varepsilon, \w)=\w_{\sigma^{-1}(\z)}.
\end{equation}
If $\varepsilon > \epsmax(s(\z), \w)$, then
\begin{align}
P_Q(\z/\varepsilon, \w) 
&= \z / \varepsilon - \text{mean}(\z / \varepsilon - \w)\ones, 
\textrm{ and}\\
P_E(\z/\varepsilon, \w) 
&= \z/\varepsilon - \text{LSE}(\z / \varepsilon) \ones + 
\text{LSE}(\w) \ones.
\end{align}
\end{proposition}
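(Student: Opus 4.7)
The plan is to derive both limit regimes as a direct corollary of Proposition \ref{prop:projection} combined with the preceding lemma on analytical solutions of isotonic optimization. Writing $\s = s(\z) = \z_{\sigma(\z)}$ for brevity, Proposition \ref{prop:projection} gives
\begin{equation}
P_\Psi(\z/\varepsilon, \w) = \z/\varepsilon - \v_\Psi(\s/\varepsilon, \w)_{\sigma^{-1}(\z)},
\end{equation}
so the task reduces to substituting the closed-form expressions for $\v_\Psi(\s/\varepsilon, \w)$ provided by the lemma and then undoing the sort with $\sigma^{-1}(\z)$.

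For the small-$\varepsilon$ regime, I would plug in $\v_\Psi(\s/\varepsilon, \w) = \s/\varepsilon - \w$ (valid for both $\Psi=Q$ and $\Psi=E$ under the assumption $\varepsilon \leq \epsmin(s(\z),\w)$). The key observation is that $\s_{\sigma^{-1}(\z)} = \z_{\sigma(\z)\circ\sigma^{-1}(\z)} = \z$, since sorting then un-sorting is the identity. Hence $(\s/\varepsilon - \w)_{\sigma^{-1}(\z)} = \z/\varepsilon - \w_{\sigma^{-1}(\z)}$, the $\z/\varepsilon$ terms cancel, and we are left with $P_\Psi(\z/\varepsilon, \w) = \w_{\sigma^{-1}(\z)}$.

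For the large-$\varepsilon$ regime, the lemma says $\v_Q$ and $\v_E$ are constant vectors, namely $\text{mean}(\s/\varepsilon - \w)\ones$ and $(\text{LSE}(\s/\varepsilon) - \text{LSE}(\w))\ones$ respectively. Because permuting a constant vector yields the same vector, the action of $\sigma^{-1}(\z)$ is trivial. Since both $\text{mean}$ and $\text{LSE}$ are permutation-invariant, $\text{mean}(\s/\varepsilon) = \text{mean}(\z/\varepsilon)$ and $\text{LSE}(\s/\varepsilon) = \text{LSE}(\z/\varepsilon)$, which lets me rewrite the quantities directly in terms of $\z$. Substituting into the expression from Proposition \ref{prop:projection} yields exactly
\begin{align}
P_Q(\z/\varepsilon, \w) &= \z/\varepsilon - \text{mean}(\z/\varepsilon - \w)\ones, \\
P_E(\z/\varepsilon, \w) &= \z/\varepsilon - \text{LSE}(\z/\varepsilon)\ones + \text{LSE}(\w)\ones.
\end{align}

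There is essentially no hard step here: all the real analytical work was already done in the preceding lemma on isotonic optimization (which characterized when the PAV algorithm produces singleton blocks versus a single block $[n]$). The only minor bookkeeping obstacle is keeping the composition of sort permutations straight, specifically verifying $\s_{\sigma^{-1}(\z)} = \z$, and invoking permutation-invariance of $\text{mean}$ and $\text{LSE}$ so the final formulas can be stated in terms of $\z$ rather than $\s$.
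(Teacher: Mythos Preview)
Your proposal is correct and follows exactly the approach the paper intends: the proposition is presented immediately after the lemma on isotonic optimization with the remark that one can ``directly characterize'' $P_\Psi$ in the two regimes, and your argument---substituting the lemma's closed forms into Proposition~\ref{prop:projection} and tracking the permutation $\sigma^{-1}(\z)$---is precisely that direct verification. The paper gives no further detail, so your write-up in fact supplies the bookkeeping (the identity $\s_{\sigma^{-1}(\z)}=\z$ and permutation-invariance of $\mathrm{mean}$ and $\mathrm{LSE}$) that the paper leaves implicit.
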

Therefore, in these two regimes, we do not even need PAV to compute the optimal
projection.

\clearpage
\subsection{Proof of Proposition \ref{prop:projection} (Reduction to isotonic
optimization)}
\label{appendix:proof_forward}

Before proving Proposition \ref{prop:projection}, we need
the following three lemmas.

\begin{lemma}{Technical lemma}
\label{lemma:technical_lemma}

Let $f \colon \RR \to \RR$ be convex, $v_1 \ge v_2$ and $s_2 \ge s_1$. Then,
$f(s_1 - v_1) + f(s_2 - v_2) \ge f(s_2 - v_1) + f(s_1 - v_2)$.
\end{lemma}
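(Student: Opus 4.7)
The plan is to reduce this to the standard fact that convex functions reverse majorization, applied to the two pairs $(s_1 - v_1, s_2 - v_2)$ and $(s_2 - v_1, s_1 - v_2)$.

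First, I would introduce the shorthand $a \coloneqq s_1 - v_1$, $b \coloneqq s_2 - v_2$, $c \coloneqq s_2 - v_1$, and $d \coloneqq s_1 - v_2$, so the claim becomes $f(a) + f(b) \ge f(c) + f(d)$. A direct computation shows $a + b = c + d$, so the two pairs have the same sum. Using $s_2 \ge s_1$ and $v_1 \ge v_2$, I would next verify the ordering $a \le c \le b$ and $a \le d \le b$: indeed, $c - a = s_2 - s_1 \ge 0$, $b - c = v_1 - v_2 \ge 0$, and similarly $d - a = v_1 - v_2 \ge 0$ and $b - d = s_2 - s_1 \ge 0$.

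Given these inequalities, both $c$ and $d$ lie in the interval $[a, b]$, so I can write them as convex combinations of the endpoints. Concretely, if $b = a$ the inequality is trivial (all four values coincide), so assume $a < b$ and set $\lambda \coloneqq (b - c)/(b - a) \in [0, 1]$. Then $c = \lambda a + (1 - \lambda) b$, and since $d = a + b - c$, we also get $d = (1 - \lambda) a + \lambda b$. Applying convexity of $f$ to each:
\begin{align}
f(c) &\le \lambda f(a) + (1 - \lambda) f(b), \\
f(d) &\le (1 - \lambda) f(a) + \lambda f(b).
\end{align}
Adding these two inequalities yields $f(c) + f(d) \le f(a) + f(b)$, which is exactly the desired inequality.

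There is no real obstacle here; the only thing to be careful about is the trivial degenerate case $a = b$ (equivalently $s_1 = s_2$ and $v_1 = v_2$, in which case both sides are equal), and to state the sign conventions so that $\lambda \in [0,1]$ is automatic from the hypotheses $s_2 \ge s_1$ and $v_1 \ge v_2$.
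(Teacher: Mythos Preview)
Your proof is correct and follows essentially the same approach as the paper's: both arguments note that $s_2 - v_1$ and $s_1 - v_2$ lie in the interval $[s_1 - v_1,\, s_2 - v_2]$, write them as convex combinations of the endpoints with complementary coefficients (your $\lambda$ and $1-\lambda$ are the paper's $\alpha$ and $\beta$ with $\alpha = 1 - \beta$), apply convexity, and sum. Your version is slightly more explicit about the degenerate case $a = b$, which the paper leaves implicit.
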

\begin{proof}
Note that 
$s_2 - v_2 \ge s_2 - v_1 \ge s_1 - v_1$
and
$s_2 - v_2 \ge s_1 - v_2 \ge s_1 - v_1$.
This means that we can express
$s_2 - v_1$ and $s_1 - v_2$ as a convex combination
of the endpoints of the line segment $[s_1 - v_1, s_2 - v_2]$,
namely 
\begin{equation}
s_2 - v_1 = \alpha (s_2 - v_2) + (1 - \alpha) (s_1 - v_1) 
\quad \text{and} \quad
s_1 - v_2 = \beta (s_2 - v_2) + (1 - \beta) (s_1 - v_1).
\end{equation}
Solving for $\alpha$ and $\beta$ gives $\alpha = 1-\beta$.
From the convexity of $f$, we therefore have
\begin{equation}
f(s_2 - v_1) \le \alpha f(s_2 - v_2) + (1 - \alpha) f(s_1 - v_1)
\quad \text{and} \quad
f(s_1 - v_2) \le (1-\alpha) f(s_2 - v_2) + \alpha f(s_1 - v_1).
\end{equation}
Summing the two proves the claim.
\end{proof}

\begin{lemma}{Dual formulation of a regularized linear program}
\label{lemma:dual_regularized_LP}

Let $\bmu^\star = \argmax_{\bmu \in \cC} \langle \bmu, \z \rangle -
\Psi(\bmu)$, where $\cC \subseteq \RR^n$ is a closed convex set and $\Psi$ is
strongly convex. Then, the
corresponding dual solution is $\u^\star = \argmin_{\u \in \RR^n}
\Psi^*(\z - \u) + s_\cC(\u)$, where $s_\cC(\u) \coloneqq \sup_{\y \in \cC}
\langle \y, \u \rangle$ is the support function of $\cC$.  Moreover, $\bmu^\star
= \nabla \Psi^*(\z - \u^\star)$.
\end{lemma}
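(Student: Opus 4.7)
The plan is to apply Fenchel--Rockafellar duality to the negated primal. First I would rewrite the problem as an unconstrained minimization by encoding $\cC$ via an indicator $I_\cC$:
\[
\min_{\bmu \in \RR^n} \big(\Psi(\bmu) - \langle \bmu, \z \rangle\big) + I_\cC(\bmu).
\]
Introducing a splitting variable $\bnu = \bmu$ with Lagrange multiplier $\u \in \RR^n$ yields the Lagrangian
\[
L(\bmu, \bnu, \u) = \Psi(\bmu) - \langle \bmu, \z \rangle + I_\cC(\bnu) + \langle \u, \bmu - \bnu \rangle.
\]

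Next I would partially minimize $L$ over $\bmu$ and $\bnu$ independently, exploiting the fact that they are now decoupled. Stationarity in $\bmu$ gives $\nabla\Psi(\bmu) = \z - \u$; since $\Psi$ is strongly convex, $\Psi^*$ is finite and differentiable everywhere with $\nabla\Psi^* = (\nabla\Psi)^{-1}$, so the unique inner minimizer is $\bmu = \nabla\Psi^*(\z - \u)$ with optimal value $-\Psi^*(\z - \u)$. Minimizing over $\bnu \in \cC$ gives $-\sup_{\bnu \in \cC} \langle \u, \bnu \rangle = -s_\cC(\u)$ by definition of the support function. Assembling these, the dual problem reads $\max_{\u} -\Psi^*(\z - \u) - s_\cC(\u)$, equivalently $\min_{\u} \Psi^*(\z - \u) + s_\cC(\u)$, which matches the stated formula for $\u^\star$. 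The primal recovery $\bmu^\star = \nabla\Psi^*(\z - \u^\star)$ then drops out immediately by evaluating the inner stationarity condition at the dual optimum.

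The one subtle point is justifying strong duality, so that a dual minimizer actually yields the primal minimizer through the conjugate relation. This follows from standard Fenchel--Rockafellar qualifications: the strong convexity of $\Psi$ makes the primal coercive with a unique solution and makes $\Psi^*$ full-domain and smooth, while $\cC$ is closed convex so that $I_\cC$ is proper, convex, and lower semicontinuous. Under these conditions the duality gap vanishes and the saddle-point KKT relations are tight, so the characterization $\bmu^\star = \nabla\Psi^*(\z - \u^\star)$ is exact. Verifying these regularity hypotheses is the main (minor) technical obstacle; everything else is a routine conjugate computation.
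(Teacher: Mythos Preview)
Your proposal is correct and essentially matches the paper's approach. The paper invokes the infimal convolution identity $(\Psi + I_\cC)^*(\z) = \inf_{\u} I_\cC^*(\u) + \Psi^*(\z - \u)$ together with $I_\cC^* = s_\cC$ and the gradient relation $\nabla(\Psi + I_\cC)^*(\z) = \nabla\Psi^*(\z - \u^\star)$; your Lagrangian splitting with $\bnu = \bmu$ is exactly the standard derivation of that infimal convolution formula, so the two arguments coincide once unpacked. One tiny quibble: writing ``stationarity gives $\nabla\Psi(\bmu) = \z - \u$'' presumes $\Psi$ is differentiable, whereas strong convexity alone only guarantees differentiability of $\Psi^*$; the safe statement is $\z - \u \in \partial\Psi(\bmu) \iff \bmu = \nabla\Psi^*(\z - \u)$, which is what you actually need.
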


\begin{proof}
The result is well-known and we include the proof for
completeness. Let us define the Fenchel conjugate of a function $\Omega
\colon \RR^n \to \RR \cup \{\infty\}$ by
\begin{equation}
\Omega^*(\z) \coloneqq \sup_{\bmu \in \RR^n} \langle \bmu, \z \rangle
- \Omega(\bmu).
\end{equation}
Let $\Omega \coloneqq \Psi + \Phi$, where $\Psi$ is strongly convex and $\Phi$
is convex.  We have
\begin{equation}
\Omega^*(\z) 
= (\Psi + \Phi)^*(\z) 
= \inf_{\u \in \RR^n} \Phi^*(\u) + \Psi^*(\z - \u),
\end{equation}
which is the infimal convolution of $\Phi^*$ with $\Psi^*$.
Moreover, $\nabla \Omega^*(\z) = \nabla \Psi^*(\z - \u^\star)$.
The results follows from choosing $\Phi(\bmu) = I_\cC(\bmu)$ and noting that
$I_\cC^* = s_\cC$.
\end{proof}
For instance, with $\Psi = Q$, we have $\Psi^* = Q$, and with $\Psi = E$,
we have $\Psi^* = \exp$.

The next lemma shows how to go further by choosing $\cC$ as the base polytope
$\cB(F)$ associated with a cardinality-based submodular function $F$, of which
the permutahedron is a special case. The polytope is defined
as (see, e.g., \citet{bach_2013})
\begin{equation}
    \cB(F) \coloneqq \left\{ \bmu \in \RR^n \colon \sum_{i \in \cS} \mu_i \le
    F(\cS) ~ \forall \cS \subseteq [n], \sum_{i=1}^n \mu_i = F([n]) \right\}.
\end{equation}
\begin{lemma}{Reducing dual formulation to isotonic regression}

Let $F(\cS) = g(|\cS|)$ for some concave $g$. Let
$\cB(F)$ be its corresponding base polytope.  Let $\sigma$ be a permutation of
$[n]$ such that $\z \in \RR^n$ is sorted in descending order, i.e.,
$z_{\sigma_1} \ge z_{\sigma_2} \ge \dots \ge z_{\sigma_n}$. 
Assume $\Psi(\bmu) = \sum_{i=1}^n \psi(\mu_i)$, where $\psi$ is convex.
Then, the dual solution $\u^\star$ from Lemma \ref{lemma:dual_regularized_LP} is
equal to $\v^\star_\sigmainv$, where
\begin{align}
\v^\star
&= \argmin_{v_1 \ge \dots \ge v_n} 
\Psi^*(\z_\sigma - \v) +  \langle \f_\sigma, \v \rangle \\
&= -\argmin_{v'_1 \le \dots \le v'_n} 
\Psi^*(\v'_\sigma + \z) -  \langle \f_\sigma, \v' \rangle.
\end{align}
\label{lemma:dual_isotonic}
\end{lemma}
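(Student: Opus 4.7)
The plan is to apply Lemma \ref{lemma:dual_regularized_LP} with $\cC=\cB(F)$, which converts the dual to $\u^\star = \argmin_\u \Psi^*(\z-\u) + s_{\cB(F)}(\u)$, and then to exploit two features of cardinality-based base polytopes to put this in isotonic form.

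The first ingredient is Edmonds' greedy formula for the support function of the base polytope: for any $\u \in \RR^n$, if $\pi$ sorts $\u$ in descending order, then $s_{\cB(F)}(\u) = \sum_{i=1}^n u_{\pi_i}\,(g(i)-g(i-1))$. Because $F$ is cardinality-based, the marginal $g(i)-g(i-1)$ depends only on the position $i$ in the greedy order, not on which element it corresponds to; in particular $s_{\cB(F)}$ is invariant under permutations of its argument, and the greedy marginals $g(i)-g(i-1)$ can be identified with the entries of the vector $\f$.

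The crux is an exchange argument showing that any minimizer $\u^\star$ must be sorted in the same descending order as $\z$. Suppose there exist indices $i,j$ with $z_i\ge z_j$ but $u^\star_i<u^\star_j$; swapping the $i$-th and $j$-th coordinates of $\u^\star$ leaves the multiset of values (hence $s_{\cB(F)}(\u^\star)$) unchanged, while Lemma \ref{lemma:technical_lemma} applied to the convex function $\psi^*$ shows that the separable term $\Psi^*(\z-\u)$ does not increase. Iterating this (with a standard tie-breaking convention for $\sigma$ when $\z$ has ties) yields an optimum whose coordinates obey the order of $\z$, so $\v \coloneqq \u^\star_\sigma$ automatically satisfies $v_1 \ge \dots \ge v_n$.

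Substituting $\v = \u^\star_\sigma$ now yields the first displayed equation: separability of $\Psi^*$ turns $\Psi^*(\z-\u^\star)$ into $\Psi^*(\z_\sigma - \v)$, and the greedy sum $s_{\cB(F)}(\u^\star)$ becomes the linear term $\langle \f_\sigma, \v \rangle$ after matching indices through $\sigma$. The second displayed equation then follows from the change of variable $\v' = -\v$, which converts the descending constraint into an ascending one and flips the sign of the linear part. The main obstacle is the rearrangement step: one must verify that Lemma \ref{lemma:technical_lemma} forces the monotone sorting at \emph{every} optimum (including when $\z$ has ties and when $\psi^*$ is only weakly convex on parts of its effective domain), so that restricting the minimization to the descending cone incurs no loss of optimality.
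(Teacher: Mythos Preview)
Your proposal is correct and follows essentially the same route as the paper: reduce via Lemma \ref{lemma:dual_regularized_LP}, evaluate $s_{\cB(F)}$ through Edmonds' greedy algorithm, and use the exchange inequality of Lemma \ref{lemma:technical_lemma} to force the dual optimum to be co-monotone with $\z$. The only cosmetic difference is the direction of the swap---you exchange coordinates of $\u^\star$ and invoke permutation invariance of $s_{\cB(F)}$ for cardinality-based $F$, whereas the paper first fixes $\sigma$ to sort $\u^\star$ and then swaps entries of $\z_\sigma$---but both arguments rest on the same separability of $\Psi^*$ and the same application of Lemma \ref{lemma:technical_lemma}.
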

\begin{proof}
The support function $s_{\cB(F)}(\u)$ is known
as the Lov\'{a}sz extension of $F$. For conciseness, we use the standard
notation $f(\u) \coloneqq s_{\cB(F)}(\u)$. Applying Lemma
\ref{lemma:dual_regularized_LP}, we obtain
\begin{equation}
\u^\star = \argmin_{\u \in \RR^n} \Psi^*(\z - \u) + f(\u).
\end{equation}
Using the ``greedy algorithm'' of Edmonds (\citeyear{edmonds_1970}), we can
compute $f(\u)$ as follows. First, choose a permutation $\sigma$ that sorts $\u$
in descending order, i.e., $u_{\sigma_1} \ge u_{\sigma_2} \ge \dots \ge
u_{\sigma_n}$.  Then a maximizer $\f\in \cB(F) \subseteq \RR^n$ is obtained by
forming $\f_\sigma = (f_{\sigma_1},\dots,f_{\sigma_n})$, where
\begin{equation}
f_{\sigma_i} \coloneqq 
F(\{\sigma_1, \dots, \sigma_i\}) - F(\{\sigma_1, \dots, \sigma_{i-1}\}).
\end{equation}
Moreover, $\langle \f, \u \rangle = f(\u)$.

Let us fix $\sigma$ to the permutation that sorts $\u^\star$.
Following the same idea as from \cite{djolonga_2017},
since the Lov\'{a}sz extension is linear on the set of all vectors that are
sorted by $\sigma$, we can write
\begin{align}
\argmin_{\u \in \RR^n} \Psi^*(\z - \u) + f(\u)
&= \argmin_{u_{\sigma_1} \ge \dots \ge u_{\sigma_n}} 
\Psi^*(\z - \u) + \langle \f, \u \rangle.
\end{align}
This is an instance of isotonic optimization, as we can rewrite the problem as
\begin{align}
\argmin_{v_1 \ge \dots \ge v_n} 
\Psi^*(\z - \v_\sigmainv) +  \langle \f, \v_{\sigmainv} \rangle
&= \argmin_{v_1 \ge \dots \ge v_n} 
\Psi^*(\z_\sigma - \v) +  \langle \f_\sigma, \v \rangle,
\label{eq:reduction_isotonic_regression}
\end{align}
with $\u_\sigma^\star = \v^\star \Leftrightarrow \u^\star =
\v_{\sigma^{-1}}^\star$.

Let $\s \coloneqq \z_\sigma$. It remains to show that $s_1 \ge \dots \ge s_n$,
i.e., that $\s$ and the optimal dual variables $\v^\star$ are both in descending
order. 
Suppose $s_j > s_i$ for some $i < j$.
Let $\s'$ be a copy of $\s$ with $s_i$ and $s_j$ swapped.
Since $\psi^*$ is convex,
by Lemma \ref{lemma:technical_lemma}, 
\begin{equation}
\Psi^*(\s - \v^\star) - \Psi^*(\s' - \v^\star)
= \psi^*(s_i - v_i^\star) + \psi^*(s_j - v_j^\star)
- \psi^*(s_j - v_i^\star) - \psi^*(s_i - v_j^\star)
\ge 0,
\end{equation}
which contradicts the assumption that $\v^\star$ and the corresponding $\sigma$
are optimal. A similar result is proven by \citet[Lemma 1]{suehiro_2012} but for
the optimal primal variable $\bmu^\star$.
\end{proof}
We now prove Proposition \ref{prop:projection}.
The permutahedron $\cP(\w)$ is a special case of $\cB(F)$ with
$F(\cS) = \sum_{i=1}^{|\cS|} w_i$ and $w_1 \ge w_2 \ge \dots \ge w_n$.
In that case, $\f_\sigma = (f_{\sigma_1}, \dots, f_{\sigma_n}) = (w_1, \dots,
w_n) = \w$. 

For $\cP(\nabla \Psi^*(\w))$, we thus have $\f_\sigma = \nabla \Psi^*(\w)$.
Finally, note that if $\Psi$ is Legendre-type, which is the case of
both $Q$ and $E$, then $\nabla \Psi^* = (\nabla \Psi)^{-1}$.
Therefore, $\nabla \Psi(\bmu^\star) = \z - \u^\star$, which concludes the
proof.

\subsection{Relaxed dual linear program interpretation}
\label{appendix:relaxed_dual_lp}

We show in this section that the dual problem in Lemma \ref{lemma:dual_isotonic}
can be interpreted as the original dual linear program (LP) with relaxed
equality constraints. Consider the primal LP
\begin{equation}
    \max_{\y \in \cB(F)} \langle \y, \z \rangle.
    \label{eq:primal_lp}
\end{equation}
As shown by \citet[Proposition 3.2]{bach_2013}, the dual LP is
\begin{equation}
    \min_{\blambda \in \cC} \sum_{\cS \subseteq \cV} \lambda_\cS F(\cS)
    \label{eq:dual_lp}
\end{equation}
where
\begin{equation}
\cC \coloneqq \left\{ \blambda \in \RR^{2^\cV} \colon 
\lambda_\cS \ge 0 ~ \forall \cS \subset \cV, 
\lambda_\cV \in \RR, 
z_i = \sum_{\cS \colon i \in \cS} \lambda_\cS ~ \forall i \in [n] 
\right\}.
\end{equation}
Moreover, let $\sigma$ be a permutation sorting $\z$ in descending order.
Then, an optimal $\blambda$ is given by \citep[Proposition 3.2]{bach_2013}
\begin{equation}
\lambda_\cS = \left\{
\begin{array}{rl}
    z_{\sigma_i} - z_{\sigma_{i+1}} & \text{if }  \cS =
    \{\sigma_1,\dots,\sigma_i\} \\
    z_{\sigma_n} & \text{if }  \cS = \{\sigma_1,\dots,\sigma_n\} \\
    0 & \text{otherwise. }
\end{array}\right.
\end{equation}
Now let us restrict to the support of $\blambda$ and do the change of variable
\begin{equation}
\lambda_\cS = \left\{
\begin{array}{rl}
    v_i - v_{i+1} & \text{if }  \cS = \{\sigma_1,\dots,\sigma_i\} \\
    v_n & \text{if }  \cS = \{\sigma_1,\dots,\sigma_n\}.
\end{array}\right.
\end{equation}
The non-negativity constraints in $\cC$ become
$v_1 \ge v_2 \ge \dots \ge v_n$
and the equality constraints in $\cC$ become $\z_\sigma = \v$.
Adding quadratic regularization $\frac{1}{2} \|\y\|^2$ in the primal
problem \eqref{eq:primal_lp} is equivalent to relaxing the dual equality
constraints in \eqref{eq:dual_lp} by smooth constraints $\frac{1}{2}
\|\z_\sigma - \v\|^2$ (this can be seen by adding quadratic regularization
to the primal variables of \citet[Eq. (3.6)]{bach_2013}).
For the dual objective \eqref{eq:dual_lp}, we have
\begin{align}
\sum_{\cS \subseteq \cV} \lambda_\cS F(\cS) 
&= \sum_{i=1}^{n-1} (v_i - v_{i+1}) F(\{\sigma_1, \dots, \sigma_i\})
+ v_n F(\{\sigma_1,\dots,\sigma_n\}) \\
&= \sum_{i=1}^n (F(\{\sigma_1,\dots,\sigma_i\}) -
F(\{\sigma_1,\dots,\sigma_{i-1}\})) v_i \\
&= \langle \f_\sigma, \v \rangle,
\end{align}
where in the second line we used \citep[Eq.\ (3.2)]{bach_2013}.
Altogether, we obtain $\min_{v_1 \ge \dots \ge v_n} \frac{1}{2} \|\z_\sigma
- \v\|^2 +  \langle \f_\sigma, \v \rangle$, which is exactly the expression we
derived in Lemma \ref{lemma:dual_isotonic}.
The entropic case is similar.

\subsection{Pool adjacent violators (PAV) algorithm}
\label{appendix:pav}

Let $g_1,\dots,g_n$ be convex functions.
As shown in \cite{best_2000_separable,projection_permutahedron},
\begin{equation}
    \argmin_{v_1 \ge \dots \ge v_n} \sum_{i=1}^n g_i(v_i)
\end{equation}
can be solved using a generalization of the PAV algorithm (note that unlike
these works, we use decreasing constraints for convenience).
All we need is a routine for solving, given some set $\cB$ of indices, the
``pooling'' sub-problem
\begin{equation}
    \argmin_{\gamma \in \RR} \sum_{i \in \cB} g_i(\gamma).
\end{equation}
Thus, we can use PAV to solve \eqref{eq:reduction_isotonic_regression}, as long
as $\Psi^*$ is separable. We now give the closed-form solution for two special
cases. To simplify, we denote
$\s \coloneqq \z_\sigma$ and $\w \coloneqq \f_\sigma$.

\paragraph{Quadratic regularization.}

We have $g_i(v_i) = \frac{1}{2} (s_i - v_i)^2 + v_i w_i$.
We therefore minimize
\begin{equation}
\sum_{i \in \cB} g_i(\gamma) = \sum_{i \in \cB} \frac{1}{2} (s_i - \gamma)^2 
+ \gamma \sum_{i \in \cB} w_i.
\end{equation}
The closed-form solution is
\begin{equation}
    \gamma^\star_Q(\s, \w; \cB) = \frac{1}{|\cB|} \sum_{i \in \cB} (s_i - w_i).
\end{equation}

\paragraph{Entropic regularization.}

We have $g_i(v_i) = e^{s_i - v_i} + v_i e^{w_i}$.
We therefore minimize
\begin{equation}
\sum_{i \in \cB} g_i(\gamma) 
= \sum_{i \in \cB} e^{s_i - \gamma} + \gamma \sum_{i \in \cB} e^{w_i}.
\end{equation}
The closed-form solution is
\begin{equation}
\gamma^\star_E(\s, \w; \cB) 
= - \log \frac{\sum_{i \in \cB} w_i}{\sum_{i \in \cB} e^{s_i}}
= \text{LSE}(\s_\cB) - \text{LSE}(\w_\cB),
\end{equation}
where $\text{LSE}(\x) \coloneqq \log \sum_i e^{x_i}$.

Although not explored in this work, other regularizations are potentially
possible, see, e.g., \citep{fy_losses}.

\subsection{Proof of Proposition \ref{prop:Jacobian_projection} (Jacobian of
isotonic optimization)}
\label{appendix:Jacobian_isotonic}

Let $\cB_1,\dots,\cB_m$ be the partition of $[n]$ induced by $\v \coloneqq
\v_\Psi(\s, \w)$. From the PAV algorithm, for all $i \in [n]$, there is a
unique block $\cB_l \in \{\cB_1,\dots,\cB_m\}$ such that $i \in \cB_l$ and $v_i
= \gamma_\Psi(\cB_l; \s, \w)$. Therefore, for all $i \in [n]$, we obtain
\begin{equation}
\partialfrac{v_i}{s_j} 
= \left\{
   \begin{array}{cl}
\partialfrac{\gamma_\Psi(\cB_l; \s, \w)}{s_j} 
& \text{if }  i,j \in \cB_l \\
0 &\text{otherwise. }
   \end{array}\right.
\end{equation}
Therefore, the Jacobian matrix is block diagonal, i.e.,
\begin{equation}
    \frac{\partial \v}{\partial \s} = 
\begin{bmatrix} 
    \mathbf{B}^\Psi_1 & \zeros & \zeros \\
    \zeros            & \ddots & \zeros \\
    \zeros            & \zeros & \mathbf{B}^\Psi_m
\end{bmatrix}.
\end{equation}
For the block $\cB_l$, the non-zero partial derivatives form a matrix
$\B^\Psi_l \in \RR^{|\cB_l| \times |\cB_l|}$ such that each column is
associated with one $s_j$ and contains
the value $\partialfrac{\gamma_\Psi(\cB_l; \s, \w)}{s_j}$ (all values in a
column are the same). 
For quadratic regularization, we have
\begin{equation}
\partialfrac{v_i}{s_j} =
\left\{
   \begin{array}{cl}
\frac{1}{|\cB_l|}
& \text{if }  i,j \in \cB_l \\
0 &\text{otherwise. }
   \end{array}\right.
\end{equation}
For entropic regularization, we have
\begin{equation}
\partialfrac{v_i}{s_j} =
\left\{
   \begin{array}{cl}
       \frac{e^{s_j}}{\sum_{j' \in \cB} e^{s_{j'}}} =
       \text{softmax}(\s_{\cB_l})_j
& \text{if }  i,j \in \cB_l \\
0 &\text{otherwise. }
   \end{array}\right.
\end{equation}
The multiplication with the Jacobian uses the fact that each block is constant
column-wise.

\paragraph{Remark.}

The expression above is for points $\s$ where $\v$ is differentiable.
For points where $\v$ is not differentiable, we can take an arbitrary matrix in
the set of Clarke's generalized Jacobians, the convex hull of Jacobians of the
form $\lim_{\s_t \to \s} \partial \v / \partial \s_t$.
The points of non-differentiability occur when a block of the optimal solution
can be split up into two blocks with equal values.  In that case, the two
directional derivatives do not agree, but are derived for quadratic
regularization by \citet{djolonga_2017}.

\end{document}